
\documentclass{article}

\usepackage{microtype}
\usepackage{dashrule} 
\usepackage{mdframed}
\usepackage{graphicx}
\usepackage{subfigure}
\usepackage{booktabs} 
\usepackage{multirow}
\usepackage{hyperref}
\usepackage{tabularx}
\newcolumntype{C}{>{\centering\arraybackslash}X}
\newcolumntype{P}[1]{>{\centering\arraybackslash}p{#1}}
\usepackage{boldline}
\usepackage{tcolorbox}


\usepackage[accepted]{icml2024}

\usepackage{amsmath}
\usepackage{amssymb}
\usepackage{mathtools}
\usepackage{amsthm}

\usepackage{xcolor,colortbl}
\usepackage{multirow}

\definecolor{blue}{RGB}{200,200,255}
\definecolor{green}{RGB}{200,255,200}
\definecolor{red}{RGB}{255,187,178}

\definecolor{blue2}{RGB}{57,61,218}
\definecolor{green2}{RGB}{52,163,58}
\definecolor{red2}{RGB}{208,88,88}

\definecolor{blueback}{RGB}{227,228,255}
\definecolor{blueout}{RGB}{0,3,112}

\usepackage[capitalize,noabbrev]{cleveref}

\theoremstyle{plain}
\newtheorem{theorem}{Theorem}[section]

\newtheorem{lemma}[theorem]{Lemma}

\theoremstyle{definition}
\newtheorem{definition}[theorem]{Definition}

\theoremstyle{remark}
\newtheorem{remark}[theorem]{Remark}

\newcommand{\floor}[1]{\lfloor #1 \rfloor}

\usepackage[textsize=tiny]{todonotes}

\icmltitlerunning{Submission and Formatting Instructions for ICML 2024}

\begin{document}

\twocolumn[
\icmltitle{An Analysis of Linear Time Series Forecasting Models}



\icmlsetsymbol{equal}{*}

\begin{icmlauthorlist}
\icmlauthor{William Toner}{sch,comp2}
\icmlauthor{Luke Darlow}{comp2}

\end{icmlauthorlist}

\icmlaffiliation{sch}{Department of Informatics, University of Edinburgh, Edinburgh}
\icmlaffiliation{comp2}{SIR Lab, Huawei Research Centre, Edinburgh}

\icmlcorrespondingauthor{William Toner}{w.j.toner@sms.ed.ac.uk}
\icmlcorrespondingauthor{Luke Darlow}{luke.darlow1@huawei.com}

\icmlkeywords{Machine Learning, Time Series}

\vskip 0.3in
]



\printAffiliationsAndNotice{}  

\begin{abstract}
Despite their simplicity, linear models perform well at time series forecasting, even when pitted against deeper and more expensive models. A number of variations to the linear model have been proposed, often including some form of feature normalisation that improves model generalisation. In this paper we analyse the sets of functions expressible using these linear model architectures. In so doing we show that several popular variants of linear models for time series forecasting are equivalent and functionally indistinguishable from standard, unconstrained linear regression. We characterise the model classes for each linear variant. We demonstrate that each model can be reinterpreted as unconstrained linear regression over a suitably augmented feature set, and therefore admit closed-form solutions when using a mean-squared loss function. We provide experimental evidence that the models under inspection learn nearly identical solutions, and finally demonstrate that the simpler closed form solutions are superior forecasters across 72\% of test settings.

\end{abstract}
\section{Introduction}

Time series forecasting is a crucial challenge across a wide array of domains, where accurate predictions of the future are essential. Key areas such as finance, meteorology, healthcare, cloud infrastructure, and traffic flow management rely heavily on forecasting for decision-making and strategic planning \citep{wu2021autoformer, lai2018modeling, sloss2019metrics,taylor2018forecasting, darlow2023foldformer, Joosen2023How}. This has led to significant research efforts to develop effective forecasting models. Deep learning has transformed many fields, most notably in computer vision and language processing, superseding simpler classical models. Following these successes, deep learning has seen increasing usage in time series forecasting. In particular transformer models have been adapted for broader time series forecasting applications \citep{nie2022time,zhou2021informer,liu2021pyraformer,wu2021autoformer,liu2023itransformer,anonymous2024dam}.

\paragraph{Linear models for forecasting}
Despite the advantages offered by deep learning, its application to time series forecasting has encountered unique challenges. Recent studies have shown that the performance benefits of deep models for forecasting are often marginal when compared to simpler linear models \cite{zeng2023transformers,li2023revisiting}. Linear models are also appealing due to their simplicity, explainability, and efficiency. This is particularly relevant in industries where forecasting models are queried frequently and/or involve high-resolution data, such as in cloud resource allocation \citep{Joosen2023How,darlow2023foldformer}. This has spurred a growing interest in refining linear models: several variants of linear time series forecasting models have emerged, each purporting superiority owing to some architectural difference (summary in Section \ref{sec:related}). We show in this research that many of these popular, and often high-performing, linear models are essentially equivalent. By this we mean that the parametric families of functions which they describe, are equal (up to choice of data normalisation). The convexity of least-squares linear regression makes this a significant finding since it implies that all these models should converge to the same optima, given a suitable optimiser.

\paragraph{Outline and Contributions}

In this paper, we delve into the mathematics of several well-known linear time series forecasting models. We fully characterise the set of functions which are expressible using each architecture. We show, somewhat remarkably, that they are all essentially equivalent: corresponding either to unconstrained or weakly constrained (via feature augmentation) linear regression. The convexity of least-squares linear regression suggests that the behaviour of these models should therefore be virtually indistinguishable. We provide experimental evidence which supports this hypothesis, showing that, in practice, all models tend to the same optima. Furthermore, we show that the closed form solution to least-squares linear regression performs either comparably or better than those trained by gradient descent. Our contributions are:

\begin{enumerate}
    \item Mathematical proofs that several popular linear models for time series forecasting are essentially identical.
    \item Experimental evidence that each model indeed tend to the same solution when trained on the same data, differing only in the bias parameter.
    \item Quantitative evidence that closed form ordinary least squares (OLS) solutions are typically superior to existing models trained using stochastic gradient descent.
\end{enumerate}

The goal of this paper is to provide a, much-needed, in-depth mathematical analysis of several popular linear time-series forecasting models. We aim to demonstrate that, from a functional and performance point of view, these models are not substantially different to each other and amount to weakly constrained linear regression. 

\section{Related Work}\label{sec:related}

\citet{zeng2023transformers} asked the important question of whether the transformer architecture \citep{vaswani2017attention} had utility for time series forecasting. Their work introduced two models, namely \textbf{DLinear} (Section \ref{sec:dlinear}) and \textbf{NLinear} (Section \ref{sec:nlinear}), that have become widely used baselines for other research in time series forecasting \citep{anonymous2024dam, nie2022time, liu2023itransformer}. Their work served to show that linear models are comparable, and sometimes better, than complex transformer architectures.

Reversible instance normalisation \citep{kim2021reversible} (RevInv) is a feature normalisation technique that typically improves time series forecasting. It operates by standardising input features (zero mean, unit standard deviation) before passing these through a given model, and reversing this standardisation process as a final step (with an optional learnable affine transformation). We unpack the mathematics of how various modes of instance normalisation constrain the underlying model class in Section \ref{sec:invertible_norms}.

\citet{li2023revisiting} revisited long-term time series forecasting by exploring the impact of RevInv and channel independence (CI). CI for linear models implies learning distinct models for each variate in a given dataset. They proposed \textbf{RLinear} -- a linear mapping that uses RevInv -- and tested the impact of CI, showing how for some datasets (usually with a higher number of channels and/or complexity) CI improves generalisation. We define RLinear in Section \ref{sec:rlinear}.

\citet{xu2023fits} recently proposed \textbf{FITS}, a linear time series model that operates in frequency space and includes an optional high-frequency filtering component to reduce the model footprint. FITS first computes the real discrete Fourier transform (RFT), applies a complex linear map, and inverts the result back into the time domain. We define FITS in Section \ref{sec:fits}. The performance of FITS is impressive, obtaining at or near state of the art (SoTA) under its optimal hyperparameter settings.

\section{Analysis of Linear Time Series Forecasting Models}\label{sec:model_analysis}
For the purpose of this paper we refer to a `model class' as the parametric set of functions induced by a model architecture. For example, a single layer linear neural network with no hidden layer has the model class $\vec{x}\mapsto A\vec{x}+\vec{b}$, where the dimensions of $A$ and $\vec{b}$ are as appropriate. We call this `\textbf{Linear}' for the remainder of this paper. In this section we define the task of forecasting with a linear model. We then analyse the widely used DLinear (Section \ref{sec:dlinear}) and the recent SoTA FITS architectures (Section \ref{sec:fits}). We prove mathematically that these models are equivalent to linear regression in that they have the same model class.

We then define and discuss several invertible data normalisation strategies employed for time series forecasting in Section \ref{sec:invertible_norms}. These normalisation strategies yield additional linear model variants, namely RLinear, NLinear, and FITS+IN (i.e., FITS with instance normalisation, as per \citet{xu2023fits}). We show how each choice of feature normalisation restricts the model class. This allows us to categorise all linear model variants into only 3 similar but distinct classes. 

\subsection{Notation}
The following notations are used throughout this paper:\vspace{-2mm}
\begin{itemize}
\itemsep0em 
    \item $L$: Context length (time steps in the input sequence).
    \item $c$: Number of channels (distinct time series).
    \item $T$: Forecast horizon (future time steps predicted).
    \item $\vec{x}$: Context vector (historical data), $\vec{x} \in \mathbb{R}^{L \times c}$.
    \item $\vec{y}$: Target vector (values to be predicted), $\vec{y} \in \mathbb{R}^{T \times c}$.
\end{itemize}

The models we look at in Section~\ref{sec:model_analysis} do not explicitly use cross-channel information in their predictions. By this we mean that the $i$\textsuperscript{th} channel of the target is predicted only using the $i$\textsuperscript{th} channel of the context. Therefore, for improved clarity, we consider the case of $c=1$ (univariate), with $\vec{x} \in \mathbb{R}^L$.

\begin{definition}[Forecast Model and Model Class]\label{def:model_class}
    A \textbf{forecast model} is a function \( f: \mathbb{R}^L \rightarrow \mathbb{R}^T \) that generates a forecast \( \vec{y} \) from a given input, or context vector \( \vec{x} \). The collection of such forecast models forms a \textbf{Model Class}, denoted as \( M \).

    \textbf{Example}: In the case of \textbf{(unconstrained) Linear Regression}, the model class \( M(\text{Linear}) \) consists of all functions of the form \( \vec{x} \mapsto W\vec{x} + \vec{b} \), where \( W \) is a weight matrix from \( \mathbb{R}^{T \times L} \) and \( \vec{b} \) is a bias vector from \( \mathbb{R}^T \). If there are specific limitations or conditions applied to the values of \( W \) and \( \vec{b} \), we term this as \textbf{constrained Linear Regression}.
\end{definition}

\subsubsection{DLinear}\label{sec:dlinear}
\textbf{DLinear Model Definition}: Let $\vec{x}\in \mathbb{R}^{L}$ be a context vector. DLinear works by decomposing $\vec{x}$ into a `trend' and `seasonal' components. The trend component is defined by taking a moving average of the components of $\vec{x}$. The seasonal component is given by the residual $\vec{x}_{\text{seasonal}}  \coloneqq \vec{x} - \vec{x}_{\text{trend}}$. The moving average is padded so that it preserves the dimensionality of $x$. One then takes $\vec{x}_{\text{seasonal}}$ and $\vec{x}_{\text{seasonal}}$ and passes these though separate learnable linear layers.

 \begin{lemma}[DLinear Model Class]\label{lemma:dlinear}
 Let $M(\text{DLinear})$ denote the DLinear model class, i.e. the set of functions $f:\mathbb{R}^L\rightarrow\mathbb{R}^T$ which can be represented as a DLinear model. $M(\text{DLinear})$ is precisely equal to the space of affine linear functions. That is, all functions of the form $Ax + \vec{b}$ may be expressed as a DLinear model and vice versa. 
 \end{lemma}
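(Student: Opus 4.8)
The plan is to establish both inclusions between $M(\text{DLinear})$ and the set of affine linear functions $\vec{x}\mapsto W\vec{x}+\vec{b}$. The crucial preliminary observation is that every ingredient of the DLinear pipeline, apart from the two learnable layers, is itself a fixed linear map. First I would argue that trend extraction --- a moving average together with its boundary padding --- is linear in $\vec{x}$, so there is a fixed matrix $M\in\mathbb{R}^{L\times L}$ with $\vec{x}_{\text{trend}}=M\vec{x}$. The seasonal component is then $\vec{x}_{\text{seasonal}}=\vec{x}-\vec{x}_{\text{trend}}=(I-M)\vec{x}$, so the trend and seasonal operators sum to the identity on $\mathbb{R}^L$. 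This decomposition of $I$ is what drives the whole argument.

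For the forward inclusion, the two learnable layers are affine maps $\vec{x}_{\text{trend}}\mapsto A_1\vec{x}_{\text{trend}}+\vec{b}_1$ and $\vec{x}_{\text{seasonal}}\mapsto A_2\vec{x}_{\text{seasonal}}+\vec{b}_2$ with $A_1,A_2\in\mathbb{R}^{T\times L}$. Substituting the expressions for the two components and expanding gives
$$f(\vec{x})=A_1 M\vec{x}+A_2(I-M)\vec{x}+(\vec{b}_1+\vec{b}_2)=\big(A_1 M+A_2(I-M)\big)\vec{x}+(\vec{b}_1+\vec{b}_2),$$
which is exactly of the affine form $W\vec{x}+\vec{b}$ with $W=A_1M+A_2(I-M)$ and $\vec{b}=\vec{b}_1+\vec{b}_2$. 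Hence every DLinear model is affine linear.

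For the reverse inclusion, given an arbitrary target $W\in\mathbb{R}^{T\times L}$ and $\vec{b}\in\mathbb{R}^T$, I would exhibit explicit DLinear parameters realising it. Since $M+(I-M)=I$, setting $A_1=A_2=W$ yields $A_1M+A_2(I-M)=W(M+I-M)=W$; choosing $\vec{b}_1=\vec{b}$ and $\vec{b}_2=\vec{0}$ matches the bias. Thus every affine map lies in $M(\text{DLinear})$, and combined with the forward inclusion this gives the claimed equality of model classes.

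I do not anticipate a genuine obstacle: the argument rests entirely on the fact that $M$ and $I-M$ partition the identity, so the two free weight matrices can jointly reconstruct any $W$. The only point requiring care is confirming that the padding used inside the moving average is a linear, data-independent operation --- this holds for the standard replication (or zero) padding used by DLinear, and it is precisely what guarantees the existence of the single fixed matrix $M$ on which the rest of the proof depends.
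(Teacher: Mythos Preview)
Your proposal is correct and follows essentially the same approach as the paper: both expand the DLinear output as $(A_1 M + A_2(I-M))\vec{x} + (\vec{b}_1+\vec{b}_2)$ with $M$ the moving-average matrix, and both realise an arbitrary affine map by setting the two weight matrices equal to the target $W$ (so that $WM + W(I-M)=W$) and splitting the bias trivially. The only differences are notational.
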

 \begin{proof}
 Following our definition, any DLinear model can be written as $B\vec{x}_{\text{seasonal}} + C\vec{x}_{\text{trend}} + \vec{c}+\vec{d}$ where $B,C\in \mathbb{R}^{T\times L}$, $\vec{c},\vec{d}\in \mathbb{R}$ are the weight matrices and biases of DLinear's two linear layers.  This can be expressed as $B(\vec{x}- \vec{x}_{\text{trend}}) + C(\vec{x}_{\text{trend}}) + \vec{c}+\vec{d} = B(\vec{x}- D\vec{x}) + C(D\vec{x}) + \vec{c}+\vec{d} = (B-BD+CD)\vec{x} + \vec{c}+\vec{d}$ where $D$ is the (square) matrix corresponding to a padded moving average (See Appendix~\ref{App:futher_proofs} for an explanation). Thus we have shown that any DLinear model may be expressed in the form $A\vec{x} + \vec{b}$. It remains to show the converse, that is, any affine linear map is expressible in the form of a DLinear model. 
 
 Let $A\vec{x} + \vec{b}$ be some arbitrary affine linear map. We claim that $A\vec{x} + \vec{b}$ can be expressed in the form $(B-BD+CD)\vec{x} + \vec{c}+\vec{d}$. By setting e.g. $\vec{c}=\vec{b}$, $\vec{d}=0$ we match the bias terms. By setting $B=C=A$ we match the weight matrices  
  \end{proof}\vspace{-3mm}
\vspace{-0.8mm}
\begin{tcolorbox}[boxsep=3pt,left=0pt,right=0pt,top=0pt,bottom=0pt,halign=center,colback=blueback,colframe=blueout]
\small
\textbf{\boldmath$M(\text{DLinear}) = M(\text{Linear})$}
\end{tcolorbox}\vspace{-2mm}

\subsubsection{FITS}\label{sec:fits}
\textbf{FITS Model Definition:} Let $\vec{x}\in \mathbb{R}^L$ be a context vector. FITS applies the Real (discrete) Fourier Transform (RFT) to $\vec{x}$. This maps $\vec{x}$ to a complex vector of length $\floor{L/2}+1$. Next one applies a learnable complex linear map with output dimension $\floor{(L+T)/2}+1$. After this one applies the inverse RFT to map to $\mathbb{R}^{L+T}$.

\textbf{Remark:} As proposed by \citet{xu2023fits}, FITS optionally includes a low-pass filter (LPF) to discard high frequencies components. Our initial experiments showed that utilising a LPF results in a degradation in performance -- this is confirmed by analysing the settings of FITS that yield high-performance. Thus, we analyse FITS without the LPF.

\textbf{Remark}: Unlike other models, FITS outputs both a forecast and a reconstruction of the context vector. The forecast may be obtained by discarding the first $L$ components output by the model. 

\begin{theorem} [FITS Model Class] \label{theorem:fits}
Let $M(\text{FITS})$ denote the FITS model class, i.e. the set of functions $f:\mathbb{R}^L\rightarrow\mathbb{R}^{T}$ which can be represented as a FITS model. When $L\geq T-2$, $M(\text{FITS})$ is precisely equal to the space of affine linear functions $A\vec{x} + \vec{b}$. 
\end{theorem}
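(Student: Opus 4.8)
The plan is to prove the two inclusions $M(\text{FITS})\subseteq M(\text{Linear})$ and $M(\text{Linear})\subseteq M(\text{FITS})$ separately. The forward inclusion is immediate and holds for all $L,T$: the RFT and the inverse RFT are both $\mathbb{R}$-linear (as maps onto, respectively from, the appropriate real subspaces of complex frequency space), and the learnable complex layer $\vec{z}\mapsto W\vec{z}+(\text{complex bias})$ is $\mathbb{R}$-affine; composing these and discarding the first $L$ output coordinates yields a map $\vec{x}\mapsto A\vec{x}+\vec{b}$. So every FITS model is affine, and all the work is in the converse.

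For the converse I would first dispatch the bias. Since the inverse RFT is a bijection from the Hermitian-admissible output spectra onto $\mathbb{R}^{L+T}$, the additive complex bias term alone can be sent to an arbitrary constant vector in $\mathbb{R}^{L+T}$, so its last $T$ coordinates realise an arbitrary $\vec{b}$; as the bias and $W$ are independent parameters, it suffices to show the linear part can equal an arbitrary $A\in\mathbb{R}^{T\times L}$. I would work in frequency space, writing $\text{RFT}$ as a real-linear isomorphism onto a subspace $V_{\text{in}}$ and $\text{IRFT}$ as one from $V_{\text{out}}$, so that realising $A$ amounts to choosing, for each complex basis direction $e_k$ of the input spectrum, the images $\tilde{H}(e_k)$ and (for the genuinely complex, non-DC/non-Nyquist frequencies) $\tilde{H}(ie_k)$, where $\tilde H$ is the induced map $V_{\text{in}}\to V_{\text{out}}$. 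Expanding $A$ in the real RFT coordinate functionals $\{a_k\}\cup\{b_k\}$, which form a basis of $(\mathbb{R}^L)^\ast$, reduces the task to producing arbitrary coefficient vectors $\vec{\alpha}_k\in\mathbb{R}^T$ (the part of $A$ attached to $a_k$) and $\vec{\beta}_k\in\mathbb{R}^T$ (attached to $b_k$). The $\vec{\alpha}_k$ are unconstrained, since $\tilde{H}(e_k)$ ranges freely over $V_{\text{out}}$ and the composite (truncate $\circ\,\text{IRFT}$) maps onto $\mathbb{R}^T$.

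The crux, and the only place the hypothesis enters, is the $\vec{\beta}_k$. Complex linearity forces $\tilde{H}(ie_k)$ to equal $i\,(We_k)$ projected back to an admissible spectrum, so on every middle frequency $\tilde{H}(ie_k)$ is pinned to $i$ times the corresponding component of $\tilde{H}(e_k)$; only the real DC and (when $L+T$ is even) Nyquist amplitudes stay free, while the first $L$ output samples are free because the reconstruction is discarded. Tracking these, I expect $\vec{\beta}_k$ to be confined to an affine subspace of $\mathbb{R}^T$ of dimension at most $L$ (the reconstruction freedom) plus one or two (the free DC/Nyquist amplitudes). This subspace fills all of $\mathbb{R}^T$ exactly when $L+2\ge T$, i.e. when $L\ge T-2$; the parity bookkeeping is consistent, since if $L+T$ is odd then $L=T-2$ cannot occur and the genuine requirement $L\ge T-1$ is the first admissible case above $T-2$. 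The main obstacle is therefore precisely this coupling induced by $\mathbb{C}$-linearity: one must verify that the residual reconstruction-plus-DC/Nyquist freedom maps \emph{onto} $\mathbb{R}^T$ with full rank, not merely with sufficient dimension, for which I would write down the relevant sampled-sinusoid (discrete-Hilbert-transform) operator and check its non-degeneracy.
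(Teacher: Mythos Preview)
Your overall architecture matches the paper's: both dispose of the bias via surjectivity of the iRFT, both decompose the linear part by input frequency, and both reduce the hard step to a surjectivity/rank statement that carries the threshold $L\ge T-2$. The difference is in how that final surjectivity is packaged, and that is also where your proposal is incomplete.

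The paper works entirely in complex coordinates. It writes the FITS linear part as $D_{L+T}^{-1}\circ(\Pi_{L+T}^{-1}W\Pi_L)\circ D_L$, characterises explicitly the set $\mathcal{A}$ of $(L{+}T)\times L$ complex matrices arising as $\Pi_{L+T}^{-1}W\Pi_L$, and shows that for each middle input frequency the admissible column of $\mathcal{A}$ is an arbitrary element of $\mathcal{S}:=\{v\in\mathbb{C}^{L+T}: v_i=0\text{ for }i>(L+T)/2\}$. The rank check is then a single \emph{complex}-linear statement: the map $P\circ D_{L+T}^{-1}:\mathcal{S}\to\mathbb{C}^T$ (truncate-to-last-$T$ composed with iDFT) must be onto. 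This map is the bottom-left $T\times(\tfrac{L+T}{2}+1)$ block of $D_{L+T}^{-1}$, which is a Vandermonde matrix generated by distinct roots of unity, hence of full rank whenever $\tfrac{L+T}{2}+1\ge T$, i.e.\ $L\ge T-2$.

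Your two-stage split---first hit $\vec\alpha_k$, then hit $\vec\beta_k$ over the fibre---is logically equivalent (linearity makes the fibre an affine translate of a fixed kernel), but it makes the rank check harder to execute: you have replaced one complex Vandermonde check by a real $(L{+}2)\to T$ map that mixes the reconstruction freedom with the discrete Hilbert transform of the chosen spectrum. This can be pushed through, but the clean closure is to note that treating the target as $\mathbb{C}^T$ rather than $\mathbb{R}^T\times\mathbb{R}^T$, and the source as the single complex column $We_k$, collapses your two stages into exactly the Vandermonde submatrix above. So your dimension count is right and the structure is right; the missing ingredient is recognising that the ``non-degeneracy of the sampled-sinusoid operator'' you flag is precisely the non-vanishing Vandermonde determinant of an iDFT submatrix, which is what the paper invokes.
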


Proving Theorem~\ref{theorem:fits} is somewhat involved. Importantly, as a combination of a Fourier transform, a complex linear map, an inverse Fourier transform, FITS is a composition of linear maps and is therefore expressible in the form $A\vec{x}+\vec{b}$. The proof in Appendix~\ref{sec:fitsProof} shows when $L\geq T-2$ that both $A$ and $\vec{b}$ are entirely unconstrained. This is significant since all the settings in \citet{xu2023fits} utilise a context larger or equal to the prediction horizon $T$.\vspace{-1mm}

\begin{tcolorbox}[boxsep=3pt,left=0pt,right=0pt,top=0pt,bottom=0pt,halign=center,colback=blueback,colframe=blueout]
\small
\textbf{\boldmath$M(\text{DLinear}) = M(\text{Linear}) = M(\text{FITS})$}\vspace{1mm} \\(when $L\geq T-2$ and without a low-pass filter)
\end{tcolorbox}\vspace{-1mm}

\subsection{Invertible Data Normalisations}\label{sec:invertible_norms}
Invertible instance-wise feature normalisation has been recently adopted for time-series forecasting. `Instance normalisation' (in the context of time series) was proposed by \citet{kim2021reversible}. In this section we cover three such mechanisms: Instance Norm (IN), Reversible Instance Norm (RevIN), and \textit{NowNorm} (NN) which is the name we give to the normalisation scheme implemented by NLinear. For clarity, RevIN and IN are identical except for the learnable affine mapping of RevIN -- we mark this distinction because the optional learnable affine map is often not used (e.g., FITS). We look at how each normalisation restricts the model class when used in conjunction with linear models.

\subsection{Instance Norm}

\begin{definition}[Instance normalisation]\label{def:instance_norm}
Given a context vector $\vec{x}$ and a target vector $\vec{y}$, \textbf{instance normalisation} (IN) for each data instance involves normalizing $\vec{x}$ by its mean $\mu(\vec{x})$ and standard deviation $\sigma(\vec{x} )$, applying a model $f$ on the normalized $\vec{x}'$, and inversely transforming the prediction $\hat{y}$ back to the original scale. Formally, this is expressed as:
\begin{align*}
    \vec{x}' &= \frac{\vec{x} - \mu(\vec{x} )}{\sigma(\vec{x} ) + \varepsilon}, \\
    \hat{y} &= f(\vec{x}'), \\
    \hat{y}_{\text{out}} &= \hat{y} \cdot (\sigma(\vec{x} )+\epsilon) + \mu(\vec{x} ),
\end{align*}
where $\varepsilon$ is a small constant for numerical stability.
\end{definition}

\begin{lemma}[Linear+IN]\label{lemma:ilinear}
Let $M(\text{ILinear})$ represent the set of forecast models that can be expressed as a linear layer combined with instance normalization (Definition~\ref{def:instance_norm}). $M(\text{ILinear})$ is equal to the set of functions $f: \mathbb{R}^L \rightarrow \mathbb{R}^T$ expressible in the form $\tilde{A}\vec{x} + \vec{b}\sigma(\vec{x})$. $\tilde{A}$ is a matrix with each row summing to $1$, and $\sigma(\vec{x})$ is the standard deviation of $\vec{x}$. 
\end{lemma}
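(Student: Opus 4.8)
The plan is to prove the set equality by two inclusions, exploiting throughout the key structural fact that the mean $\mu(\vec{x}) = \tfrac{1}{L}\vec{1}^\top\vec{x}$ is a \emph{linear} functional of $\vec{x}$ (here $\vec{1}$ denotes the all-ones vector of the appropriate length), whereas the standard deviation $\sigma(\vec{x})$ is not. For the forward inclusion I would take a generic element of $M(\text{ILinear})$: compose the normalisation $\vec{x}' = (\vec{x}-\mu(\vec{x})\vec{1})/\sigma(\vec{x})$ (setting the stabiliser $\varepsilon=0$, consistent with the lemma statement), a linear layer $\vec{x}'\mapsto W\vec{x}'+\vec{c}$ with $W\in\mathbb{R}^{T\times L}$ and $\vec{c}\in\mathbb{R}^T$, and the denormalisation $\hat{y}\mapsto \hat{y}\,\sigma(\vec{x}) + \mu(\vec{x})\vec{1}$. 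The $\sigma(\vec{x})$ factor multiplying $W\vec{x}'$ cancels against the denormalising scale, leaving $W(\vec{x}-\mu(\vec{x})\vec{1}) + \vec{c}\,\sigma(\vec{x}) + \mu(\vec{x})\vec{1}$.

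The crux is then to collect the $\mu(\vec{x})$ terms. Since $\mu(\vec{x})\vec{1} = \tfrac{1}{L}\vec{1}\vec{1}^\top\vec{x}$ is itself linear in $\vec{x}$, the whole expression becomes $\tilde{A}\vec{x} + \vec{c}\,\sigma(\vec{x})$ with $\tilde{A} = W + \tfrac{1}{L}(\vec{1}_T - W\vec{1}_L)\vec{1}_L^\top$, where I track the two ones-vectors by their dimensions $L$ and $T$. I would then verify the row-sum constraint by direct computation: using $\vec{1}_L^\top\vec{1}_L = L$ one obtains $\tilde{A}\vec{1}_L = W\vec{1}_L + (\vec{1}_T - W\vec{1}_L) = \vec{1}_T$, i.e. every row of $\tilde{A}$ sums to $1$. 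Identifying $\vec{b}=\vec{c}$ then places the model in the claimed form. The point to stress here is that $\sigma(\vec{x})$, being nonlinear, genuinely \emph{cannot} be folded into the linear part, which is precisely why a separate $\vec{b}\,\sigma(\vec{x})$ term survives.

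For the converse I would show that the correction term vanishes exactly on the target set. Given any $\tilde{A}$ with unit row sums (so $\tilde{A}\vec{1}_L=\vec{1}_T$) and any $\vec{b}\in\mathbb{R}^T$, set $W=\tilde{A}$ and $\vec{c}=\vec{b}$. Then $\tfrac{1}{L}(\vec{1}_T - W\vec{1}_L)\vec{1}_L^\top = \tfrac{1}{L}(\vec{1}_T-\vec{1}_T)\vec{1}_L^\top = 0$, so the resulting ILinear model equals $\tilde{A}\vec{x} + \vec{b}\,\sigma(\vec{x})$ identically. This gives the reverse inclusion, and the two inclusions together establish equality.

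I expect the only real obstacle to be careful bookkeeping rather than any deep step: keeping straight which ones-vector has length $L$ versus $T$, and confirming that the $\mu$-correction is what manufactures the unit-row-sum constraint (as opposed to leaving $\tilde{A}$ unconstrained). The treatment of $\varepsilon$ is a minor caveat — for $\varepsilon\neq 0$ the denormalisation contributes an extra constant term $\vec{c}\,\varepsilon$ that is not of the form $\vec{b}\,\sigma(\vec{x})$; since $\varepsilon$ is a vanishing numerical-stability constant I would set $\varepsilon=0$ in line with the lemma's statement and note the discrepancy is negligible.
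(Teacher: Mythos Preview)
Your proposal is correct and follows essentially the same approach as the paper: expand the composition, cancel the $\sigma(\vec{x})$ factors, absorb the $\mu(\vec{x})$ terms into the linear part via the rank-one matrix $\tfrac{1}{L}\vec{1}_T\vec{1}_L^\top$, verify the unit row-sum condition, and for the converse set $W=\tilde{A}$ so the correction term vanishes. The only cosmetic difference is notation --- the paper writes $B_T+A-AB_L$ where $B_m$ is the $m\times L$ matrix of $1/L$'s, which is exactly your $W+\tfrac{1}{L}(\vec{1}_T-W\vec{1}_L)\vec{1}_L^\top$ --- and your row-sum check via $\tilde{A}\vec{1}_L=\vec{1}_T$ is slightly slicker than the paper's entrywise summation.
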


\begin{proof}
Let $\vec{x}\in \mathbb{R}^L$ be a context vector. Let $f$ be a forecast model obtained by applying a linear layer after IN. If $A,\vec{b}$ are the weight matrix and bias of the linear layer then we have $f(\vec{x}) = \vec{\mu}(\vec{x}) + \sigma(\vec{x})(A(\frac{\vec{x}-\vec{\mu}(\vec{x})}{\sigma(\vec{x})}) + \vec{b})$. Here $\vec{x}-\vec{\mu}(\vec{x})$ denotes the subtraction of the mean $\mu(\vec{x})$ from every component of $\vec{x}$. We can expand this expression out to obtain simply $\vec{\mu}(\vec{x}) + A(\vec{x}-\vec{\mu}(\vec{x})) + \sigma(\vec{x})\vec{b}$. The $T-$dimensional vector $\vec{\mu}(\vec{x})$ of means can be written as a matrix multiplication $B_T\vec{x}$ where $B_m$ denotes a matrix of shape $m\times L$ populated exclusively by $\frac{1}{L}$'s. Using this notation we have:
\begin{align*}
f(\vec{x}) =& \vec{\mu}(\vec{x}) + A(\vec{x}-\vec{\mu}(\vec{x}))  + \sigma(\vec{x})\vec{b}\\
=& (B_T+A-AB_L)\vec{x}  + \sigma(\vec{x})\vec{b}
\end{align*}
Thus $f$ can be written in the form $\tilde{A}\vec{x}+b\sigma(\vec{x})$, it remains only to demonstrate that $B_T+A-AB_L$ satisfies the condition that the rows sum to one. And, conversely that any matrix of shape $T\times L$ whose rows sum to one can be written in this form. Begin by noting that $AB_L$ can be written as:

\begin{frame}
\footnotesize
\setlength{\arraycolsep}{2.5pt}
\medmuskip = 1mu 
\begin{align*}
AB_L = \begin{bmatrix}
    A_{11}       & A_{12} & \dots & A_{1L} \\
    A_{21}       & A_{22} & \dots & A_{2L} \\
    \hdotsfor{4} \\
   A_{L1}       & A_{L2} & \dots & A_{TL} 
\end{bmatrix}
\begin{bmatrix}
    1/L       & 1/L & \dots & 1/L \\
    1/L       & 1/L & \dots & 1/L \\
    \hdotsfor{4} \\
   1/L       & 1/L & \dots & 1/L 
\end{bmatrix}
\\= \begin{bmatrix}
    \frac{1}{L}\sum_{i=1}^L A_{1i}       & \frac{1}{L}\sum_{i=1}^L A_{1i} & \dots & \frac{1}{L}\sum_{i=1}^L A_{1i} \\
    \frac{1}{L}\sum_{i=1}^L  A_{2i}       & \frac{1}{L}\sum_{i=1}^L  A_{2i} & \dots & \frac{1}{L}\sum_{i=1}^L  A_{2i} \\
    \hdotsfor{4} \\
   \frac{1}{L}\sum_{i=1}^L  A_{Li}  & \frac{1}{L}\sum_{i=1}^L  A_{Li} & \dots & \frac{1}{L}\sum_{i=1}^L  A_{Li} 
\end{bmatrix}
\end{align*}
Therefore the $ij$\textsuperscript{th} element of $(B_T+A-AB_L)$ may be written as $\frac{1}{L} + A_{ij}-\frac{1}{L}\sum_{k=1}^L A_{ik}$. It follows that the sum of row $i$ of $(B_T+A-AB_L)$
\begin{align*}
    =& \sum_{j=1}\big(\frac{1}{L} + A_{ij}-\frac{1}{L}\sum_{k=1}^L A_{ik}\big) \\ 
    =& 1 + \sum_{j=1}^LA_{ij} - \frac{L}{L}\sum_{k=1}^L A_{ik} \\
    =& 1 .
\end{align*}
\end{frame}
Conversely we wish to show that any $T$ by $L$ matrix $C$ whose rows sum to one can be written in the form $(B_T+A-AB_L)$. Let $C$ be such a matrix. One may easily show that in fact $C$ may be expressed as $C=B_T+C-CB_L$, thus we may let $A=C$. 
\end{proof}



\subsection{Reversible Instance Normalisation}
A second more general form of data normalisation is known as Reversible Instance Norm (RevIN) \cite{kim2021reversible}. This normalisation is designed to allow a forecasting model to handle shifts in the temporal distribution over time. \citet{li2023revisiting} showed that a simple linear model using RevIN is able to outperform most deep models on standard datasets. 

\begin{definition}[Reversible Instance Normalisation]\label{def:rin}
Given a context vector $\vec{x}$ and a target vector $\vec{y}$, \textbf{Reversible Instance Normalisation} (RevIN) for each data instance involves a two-step normalization process. First, $\vec{x}$ is normalized by its mean $\mu(\vec{x})$ and standard deviation $\sigma(\vec{x})$. Subsequently, an affine transformation with parameters $\alpha$ and $\beta$ is applied, followed by the application of a forecasting model $f$ on the transformed $\vec{x}'$. The process is then reversed to retrieve the prediction in the original scale. Formally, this is expressed:
\begin{align*}
\vec{x}' &= \frac{\vec{x} - \mu(\vec{x})}{\sigma(\vec{x}) + \varepsilon}, \\
\vec{x}'' &= \frac{\vec{x}' - \beta}{\alpha},  \\
\hat{y} &= f(\vec{x}''), \\
\hat{y}' &= \alpha\hat{y} + \beta, \\
\hat{y}_{\text{out}} &= \hat{y}' \cdot (\sigma(\vec{x}) + \varepsilon) + \mu(\vec{x}).
\end{align*}
\end{definition}

\subsubsection{RLinear}\label{sec:rlinear}
RLinear is a linear model using RevIN \citep{li2023revisiting}.

\begin{lemma} [RLinear]
Let $M(\text{RLinear})$ denote the RLinear model class, i.e. the set of functions $f:\mathbb{R}^L\rightarrow\mathbb{R}^T$ which can be represented as an  RLinear model. $M(\text{RLinear})$ is precisely equal to the space of functions $\tilde{A}\vec{x} + \vec{b}\sigma(\vec{x})$ where the rows of $\tilde{A}$ each sum to $1$ and where $\sigma(\vec{x})$ denotes the standard deviation of the context vector $\vec{x}$. 
\end{lemma}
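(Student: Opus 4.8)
The plan is to reduce RLinear to ILinear and then invoke Lemma~\ref{lemma:ilinear}. Observe from Definitions~\ref{def:instance_norm} and~\ref{def:rin} that RevIN differs from IN only by the intermediate affine map with (per-channel, hence scalar in the univariate setting) parameters $\alpha,\beta$: before the linear layer one sends $\vec{x}'\mapsto\vec{x}''=(\vec{x}'-\beta)/\alpha$, and after it one inverts this via $\hat{y}\mapsto\alpha\hat{y}+\beta$. My strategy is to show that this extra affine map can be absorbed into the weight and bias of the linear layer, so that $M(\text{RLinear})=M(\text{ILinear})$, whence the claimed characterisation follows at once from Lemma~\ref{lemma:ilinear}.

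The inclusion $M(\text{ILinear})\subseteq M(\text{RLinear})$ is immediate: setting $\alpha=1$ and $\beta=0$ in Definition~\ref{def:rin} recovers plain instance normalisation, so every ILinear model is already an RLinear model.

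For the reverse inclusion, I would take an RLinear model with linear layer $f(\vec{z})=A\vec{z}+\vec{b}$ and affine parameters $\alpha\neq 0$, $\beta$, and compute the composition of the inner affine map, $f$, and the outer inverse affine map acting on the normalised input $\vec{x}'$:
\begin{align*}
\alpha\, f\!\left(\tfrac{\vec{x}'-\beta\vec{1}_L}{\alpha}\right)+\beta\vec{1}_T &= A\vec{x}' + \bigl(\alpha\vec{b}+\beta\vec{1}_T-\beta A\vec{1}_L\bigr),
\end{align*}
where $\vec{1}_m$ is the all-ones vector in $\mathbb{R}^m$. The weight matrix acting on $\vec{x}'$ is still $A$, and the new bias $\vec{b}'=\alpha\vec{b}+\beta(\vec{1}_T-A\vec{1}_L)$ ranges over all of $\mathbb{R}^T$ as $\vec{b}$ varies, since $\alpha\neq0$. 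Thus the inner map $\vec{x}'\mapsto A\vec{x}'+\vec{b}'$ is an arbitrary affine map --- exactly the family realised by ILinear --- and composing it with the shared outer (de)normalisation yields an ILinear model, giving $M(\text{RLinear})\subseteq M(\text{ILinear})$.

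The main obstacle is the dimensional bookkeeping of $\alpha,\beta$: because they act element-wise on vectors of different lengths ($L$ on the input, $T$ on the output), I must treat them as scalars (per-channel parameters) and carry the all-ones vectors $\vec{1}_L,\vec{1}_T$ explicitly, as above, rather than cancelling them naively; I must also confirm that $\vec{b}'$ spans $\mathbb{R}^T$, which uses only $\alpha\neq0$. With both inclusions established, $M(\text{RLinear})=M(\text{ILinear})$, and Lemma~\ref{lemma:ilinear} identifies this common class with the functions $\tilde{A}\vec{x}+\vec{b}\sigma(\vec{x})$ whose weight-matrix rows each sum to one.
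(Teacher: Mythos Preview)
Your reduction is correct and the argument goes through as written. The approach, however, differs from the paper's. The paper expands the full RLinear forward map directly in terms of the raw input $\vec{x}$, obtaining
\[
f(\vec{x})=(B_T+A-AB_L)\vec{x}+\sigma(\vec{x})\bigl(\beta+\alpha c-A\beta\bigr),
\]
and then appeals to the computation inside the proof of Lemma~\ref{lemma:ilinear} to identify $B_T+A-AB_L$ with the row-sum-one matrices, while noting that the choice $\beta=0,\alpha=1$ frees the bias. You instead work in the \emph{normalised} coordinates $\vec{x}'$, observe that the extra RevIN affine pair collapses the middle map to $A\vec{x}'+\vec{b}'$ with $\vec{b}'=\alpha\vec{b}+\beta(\vec{1}_T-A\vec{1}_L)$, and conclude $M(\text{RLinear})=M(\text{ILinear})$ before invoking Lemma~\ref{lemma:ilinear} as a black box. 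Your route is shorter and more conceptual---it cleanly isolates the only difference between IN and RevIN and shows it is absorbed by the linear layer---whereas the paper's direct expansion makes the final $(\tilde{A},\vec{b})$ explicit in terms of all four parameters $A,c,\alpha,\beta$. One minor remark: for the inclusion $M(\text{RLinear})\subseteq M(\text{ILinear})$ you do not actually need $\vec{b}'$ to range over all of $\mathbb{R}^T$; it suffices that the inner map is \emph{some} affine map, which it is for any fixed parameter choice with $\alpha\neq 0$.
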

\begin{proof}
Let $f$ be arbitrary RLinear model (i.e., a forecast model obtainable by the composition of a linear layer and reversible instance norm). If $A,\vec{c}$ are the weight matrix and bias of the linear layer then 
\begin{align*}
    f(\vec{x}) = \mu(\vec{x})+\sigma(\vec{x})\left(\beta +\alpha (AR(\vec{x}) + c)\right)\\
    \text{where } R(\vec{x}) := \frac{1}{\alpha}\left(\frac{\vec{x}-\mu(\vec{x})}{\sigma(\vec{x})}-\beta\right)
\end{align*}
We can expand this out to obtain the following
\begin{frame}
\footnotesize
\setlength{\arraycolsep}{2.5pt}
\medmuskip = 1mu 
\begin{align*}
    f(\vec{x})=(\mu(\vec{x}) +A\vec{x} -A\mu(\vec{x})) + \beta\sigma(\vec{x}) +\alpha c\sigma(\vec{x})-A\beta\sigma(\vec{x}).
\end{align*}
\end{frame}
As per the proof on Lemma~\ref{lemma:ilinear} we can write the vector of means $\mu(\vec{x})$ as $B_T\vec{x}$ where $B_T$ is a $T\times L$ matrix populated by $\frac{1}{L}$'s. Therefore $f(\vec{x})$ can be expressed as 
\begin{align*}
   f(\vec{x}) =& \tilde{A}\vec{x} + \sigma(\vec{x})\vec{b}\\
    \text{where } \tilde{A} =& B_T +A -AB_L\\
    \text{and } \vec{b}=& \beta+\alpha c-A\beta
\end{align*}
As in the proof of Lemma~\ref{lemma:ilinear}, $B_T +A -AB_L$ are precisely the set of matrices where each row sums to one. It is left therefore to demonstrate that $\vec{b}$ can be any vector in $\mathbb{R}^T$. Since we are free in our choice of $\beta, \alpha, c$ then we can let $\beta =0, \alpha =1$ and $\vec{c}$ be any desired arbitrary vector in $\mathbb{R}^T$. This concludes the proof.
\end{proof}
\vspace{-1mm}
\begin{tcolorbox}[boxsep=3pt,left=0pt,right=0pt,top=0pt,bottom=0pt,halign=center,colback=blueback,colframe=blueout]
\small
\textbf{IN} and \textbf{RevIN} impose the constraints: (1) the rows of the weight matrix must sum to 1; (2) the bias is scaled by the standard deviation of the instance.
\end{tcolorbox}

\subsection{NowNorm}

\begin{definition}[Now-Normalisation]\label{def:end_norm}
Given a context vector $\vec{x}$ and a target vector $\vec{y}$, \textbf{NowNorm} (NN) involves normalising the context so that $x_{L}$, the most-recent value of $\vec{x}$, is zero. Explicitly; $\vec{x}_{\text{norm}} := \vec{x} - (x_L, x_L, \ldots, x_L)$. Next we apply a forecasting model $f$ on the normalized $\vec{x}_{\text{norm}}$, before adding $x_L$ back on to each component of the output. Formally, this is expressed as:
\begin{align*}
    \vec{x}_{\text{norm}} &= \vec{x} - x_L, \\
    \hat{y} &= f(\vec{x}_{\text{norm}}), \\
    \hat{\vec{y}}_{\text{out}} &= \hat{y} + x_L.
\end{align*}
\end{definition}

\subsubsection{NLinear}\label{sec:nlinear}
Nlinear is a linear model using NN \citep{zeng2023transformers}.

\begin{lemma}[NLinear] \label{lemma:nlinear}
    Let $M(\text{NLinear})$ denote the NLinear model class, i.e. the set of functions $f:\mathbb{R}^L\rightarrow\mathbb{R}^T$ which can be represented as a NLinear model. $M(\text{NLinear})$ is precisely equal to the space of linear functions $\tilde{A}\vec{x} + \vec{b}$ where the rows of $\tilde{A}$ each sum to $1$. 
\end{lemma}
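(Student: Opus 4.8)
The plan is to follow exactly the template of the proof of Lemma~\ref{lemma:ilinear}, replacing the mean-subtraction matrix $B_L$ by the operator corresponding to NowNorm. A NLinear model applies a linear layer $f(\vec{z})=A\vec{z}+\vec{c}$ (with $A\in\mathbb{R}^{T\times L}$, $\vec{c}\in\mathbb{R}^T$) to the normalised context $\vec{x}_{\text{norm}}=\vec{x}-x_L$ and then re-adds $x_L$ to every output coordinate. First I would introduce, in analogy with $B_m$, the matrix $E_m$ of shape $m\times L$ whose final column is all ones and whose other entries vanish, so that $E_m\vec{x}=(x_L,\dots,x_L)^\top\in\mathbb{R}^m$ is precisely the broadcast of the last coordinate. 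Equivalently $E_m=\vec{1}_m\vec{e}_L^\top$, where $\vec{1}_m$ is the all-ones vector in $\mathbb{R}^m$ and $\vec{e}_L$ is the $L$th standard basis vector. This lets me write the subtraction as $(\mathrm{Id}-E_L)\vec{x}$ and the re-addition of $x_L$ to the $T$ outputs as $E_T\vec{x}$.

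Next I would collapse the NLinear map into a single affine map. Substituting gives
\begin{align*}
f(\vec{x}) &= A(\vec{x}-E_L\vec{x}) + \vec{c} + E_T\vec{x} \\
&= (A - AE_L + E_T)\vec{x} + \vec{c},
\end{align*}
so the model is affine with weight matrix $\tilde{A}:=A-AE_L+E_T$ and bias $\vec{b}:=\vec{c}$. This already exhibits the bias as unconstrained, since $\vec{c}$ ranges freely over $\mathbb{R}^T$. The key step is then verifying the row-sum constraint: the row sums of $\tilde{A}$ are the entries of $\tilde{A}\vec{1}_L$, and since each row of both $E_L$ and $E_T$ contains a single $1$ one has $E_L\vec{1}_L=\vec{1}_L$ and $E_T\vec{1}_L=\vec{1}_T$, whence $\tilde{A}\vec{1}_L = A\vec{1}_L - A\vec{1}_L + \vec{1}_T = \vec{1}_T$; that is, every row of $\tilde{A}$ sums to one.

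For the converse I would argue exactly as in Lemma~\ref{lemma:ilinear}: given any $C\in\mathbb{R}^{T\times L}$ with unit row sums and any desired bias $\vec{b}$, set $A=C$ and $\vec{c}=\vec{b}$. The only point to check is that the correction terms cancel, which follows from $CE_L = (C\vec{1}_L)\vec{e}_L^\top = \vec{1}_T\vec{e}_L^\top = E_T$ using the row-sum hypothesis, so that $A-AE_L+E_T = C-E_T+E_T = C$. Hence $C\vec{x}+\vec{b}$ is realised as a NLinear model, completing both inclusions.

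I expect no serious obstacle, since this is simply the NowNorm analogue of the Instance-Norm argument. The main thing requiring care is the bookkeeping of the broadcast operator $E_m$ — in particular keeping the square $E_L$ (used inside the linear layer) distinct from the rectangular $E_T$ (used when re-adding $x_L$ to the $T$ outputs), and observing that $\sigma(\vec{x})$ plays no role here. This last point is what distinguishes NLinear's class from the IN/RevIN class: the bias $\vec{b}$ is a free vector rather than a $\sigma(\vec{x})$-scaled one.
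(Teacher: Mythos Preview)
Your proposal is correct and follows essentially the same approach as the paper: both collapse the NLinear map into $(A - AE_L + E_T)\vec{x} + \vec{b}$ via a last-column matrix, verify the row sums equal one, and for the converse set $A=C$ and observe the correction terms cancel. Your use of the outer-product form $E_m=\vec{1}_m\vec{e}_L^\top$ and the row-sum check via $\tilde{A}\vec{1}_L$ is a slightly cleaner rendition of the paper's explicit matrix computation, but the argument is the same.
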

\begin{proof}
     By Definition~\ref{def:end_norm}, an NLinear model can be written
     \begin{align}
         (x_L, x_L, \ldots, x_L) + A\vec{x}_{\text{norm}} +\vec{b} \label{eqn:NLinear}
     \end{align}
      Where $A,\vec{b}$ are the weight matrix and bias terms of NLinear's linear layer and $\vec{x}_{\text{norm}}$ is the normalised context vector. If we let $B_m$ denote an $m$ by $m$ matrix with 1's in the final column and zeros elsewhere. Then we can write Equation~\ref{eqn:NLinear} equivalently as $B_T\vec{x} + A(\vec{x} - B_L\vec{x}) +\vec{b} = (B_T+ A-AB_L)\vec{x} +\vec{b}$.  We claim that the rows of the matrix $B_T+ A-AB_L$ sum to one. Begin by noting that $AB_L$ has the following form:

\begin{frame}
\footnotesize
\setlength{\arraycolsep}{2.5pt}
\medmuskip = 1mu 
\begin{align*}
AB_L = \begin{bmatrix}
    A_{11}       & A_{12} & \dots & A_{1L} \\
    A_{21}       & A_{22} & \dots & A_{2L} \\
    \hdotsfor{4} \\
   A_{L1}       & A_{L2} & \dots & A_{TL} 
\end{bmatrix}
\begin{bmatrix}
    0       & 0 & \dots & 1 \\
    0       & 0 & \dots & 1 \\
    \hdotsfor{4} \\
   0       & 0 & \dots & 1 
\end{bmatrix}
\\= \begin{bmatrix}
    0       & 0 & \dots & \sum_{i=1}^L A_{1i} \\
    0       & 0 & \dots & \sum_{i=1}^L  A_{2i} \\
    \hdotsfor{4} \\
   0       & 0 & \dots & \sum_{i=1}^L  A_{Li} 
\end{bmatrix}
\end{align*}
\end{frame}

Therefore $B_T + A - AB_L$ may be expressed as follows:

\begin{frame}
\footnotesize
\setlength{\arraycolsep}{2.5pt}
\medmuskip = 1mu 
\begin{align}
    \begin{bmatrix}
    A_{11}       & A_{12} & \dots & 1-\sum_{i=1}^{L-1}  A_{1i} \\
    A_{21}       & A_{22} & \dots & 1-\sum_{i=1}^{L-1} A_{2i} \\
    \hdotsfor{4} \\
   A_{L1}       & A_{L2} & \dots & 1-\sum_{i=1}^{L-1} A_{Li} \label{eqn:row_sum}
\end{bmatrix}
\end{align}
\end{frame}

It is clear to see that the rows of this matrix sum to one as claimed. Moreover, any matrix whose rows sum to 1 may be written as Equation~\ref{eqn:row_sum}. Since the bias $\vec{b}$ is unconstrained then we conclude our proof. 
\end{proof}

\vspace{-2mm}
\begin{tcolorbox}[boxsep=3pt,left=5pt,right=5pt,top=5pt,bottom=5pt,colback=blueback,colframe=blueout]
\small
\textbf{NowNorm} imposes the same weight matrix constraint as \textbf{IN} and \textbf{RevIN}, but does not constrain the bias.
\end{tcolorbox}

\vspace{1.5mm} 

\textit{Integrating the insights from Lemma~\ref{lemma:dlinear} and Theorem~\ref{theorem:fits} with the analyses presented in this subsection, we establish the following equivalences among the model classes:}

\vspace{0.5mm}
\begin{tcolorbox}[boxsep=3pt,left=5pt,right=5pt,top=5pt,bottom=5pt,colback=blueback,colframe=blueout]
\small
\centering 
\textbf{\boldmath$M(\text{DLinear+IN}) = M(\text{Linear+IN}) = M(\text{FITS+IN}) = M(\text{RLinear})  \approx M(\text{NLinear})$ }
\end{tcolorbox}

\section{Discussion}\label{sec:discussion}


\begin{table}[ht]
\centering
\scriptsize
\setlength{\tabcolsep}{3pt}
\renewcommand{\arraystretch}{1.3} 
\begin{tabular}{|c|c|c|c|}
\hline
\textbf{Model Class} & \textbf{Variants} & \textbf{Normalisation} & \textbf{Constraints} \\ \hline
\multirow{1}{*}{$A\vec{x} + \vec{b}$}     & Linear, DLinear, FITS & None & None \\ \hline
    $\widetilde{A}\vec{x} + \vec{b}$ & NLinear & NowNorm & Rows sum to one \\ \hline
\multirow{4}{*}{$\widetilde{A}\vec{x} + \vec{b}\sigma(\vec{x})$} & \multirow{2}{*}{RLinear} & \multirow{2}{*}{RevIn} & \\ 
 &  &  & Rows sum to one \\ \cline{2-3}
& \multirow{2}{*}{FITS+IN} & \multirow{2}{*}{Instance} & Bias coupled with $\sigma$ \\ 
&  &  &  \\ \hline
\end{tabular}
\caption{A summary of the model classes for the DLinear, FITS, RLinear, NLinear and Linear models. Here $\widetilde{A}$ denotes a matrix whose rows must each sum to one and $\sigma(\vec{x})$ denotes the standard deviation of the components of the context vector.}
\label{table:model_summary}
\end{table}

Our analysis is summarised in Table~\ref{table:model_summary}. When $L\geq T-2$ FITS and DLinear are functionally equivalent to unconstrained linear regression (Definition~\ref{def:model_class}). In Section~\ref{sec:invertible_norms} we looked at the model classes for linear models which use one of the standard normalisation procedures for time series analysis. We saw how using normalisation slightly alters the model class. For example, NLinear is equivalent to restricted linear regression wherein the rows of the weight matrix must sum to 1. We showed that Linear+IN, Linear+RevIN (RLinear \cite{li2023revisiting}), and FITS+IN (i.e., the setting in \citep{xu2023fits}) are equivalent to each other, and differ from NLinear in that the bias is parameterised as $\vec{b}\sigma(\vec{x})$. Perhaps most importantly, each model class can be reformulated as unconstrained linear regression on an augmented feature set, and are solvable in closed form owing to convexity.

\paragraph{Convexity} Each of the models we have discussed train using a mean-squared error (MSE) loss function \citep{xu2023fits, zeng2023transformers}. Linear regression with a mean-squared loss function is a convex optimisation problem. By this we mean that the training loss is a convex function of the parameters. A consequence of convexity is that there exists a unique global optima which minimises the training loss (uniqueness requires that the training data is full rank). Significantly, this means that given the same training data, these models should converge to the same solution, via a suitable optimisation procedure.

\paragraph{Closed Form} An important property of least-squares linear regression is that it admits a closed-form solution. A recap of how one computes a closed-form solution for linear regression and closed form solutions for the three model classes in Table~\ref{table:model_summary} may be found in Appendix \ref{sec:closed_form}. In Section~\ref{sec:experiments} we refer to the closed form solutions as the ordinary least-squares (OLS) models, and we will determine how each model fairs against this closed form approach.

\textbf{Remark:} FITS has two separate training modes. In mode 1 the model is trained by mean-squared error (MSE) between the forecast and the target. In mode 2 an additional term is added to the loss which is the MSE between the context vector and FITS reconstruction. Empirically both settings have similar performance \cite{xu2023fits}. In our analysis and experiments we consider mode 1 only.

\begin{tcolorbox}[boxsep=3pt,left=0pt,right=0pt,top=0pt,bottom=0pt,halign=center,colback=blueback,colframe=blueout]
\small
For each model class in Table~\ref{table:model_summary}, the least-squares optima may be found in \textbf{closed form}.
\end{tcolorbox}

We have hypothesised that the convexity of least-squares linear regression means that each model should converge to the same solution given the same data. Nevertheless, given that each model architecture yields a different parameterisation and initialisation, this still leaves open the possibility that early stopping may impact generalisation. Next, we explore how the parameterisation of FITS has the effect of inducing a much lower learning rate on the bias term compared to that of the weight matrix.

\subsection{The FITS Bias-Term}\label{sec:fitsBias-main}
In Theorem~\ref{theorem:fits} we showed that any FITS model can be written in the form $A\vec{x}+\vec{b}$. Moreover, we showed how one may obtain $A, \vec{b}$ given the weight matrix and the bias of the complex linear layer (Appendix~\ref{lemma:prescription}). Specifically, if $\vec{c}$ denotes the complex bias of the complex linear layer then $\vec{b}= iRFT(\vec{c})$ where iRFT denotes the inverse discrete Fourier transform (Definition~\ref{def:rfft}).  It is important to consider what the implications are of parameterising the bias term in this way, rather than simply parameterising $\vec{b}$ directly. 

\begin{figure*}[!htbp]
	\centering
	\includegraphics[width=0.9\linewidth]{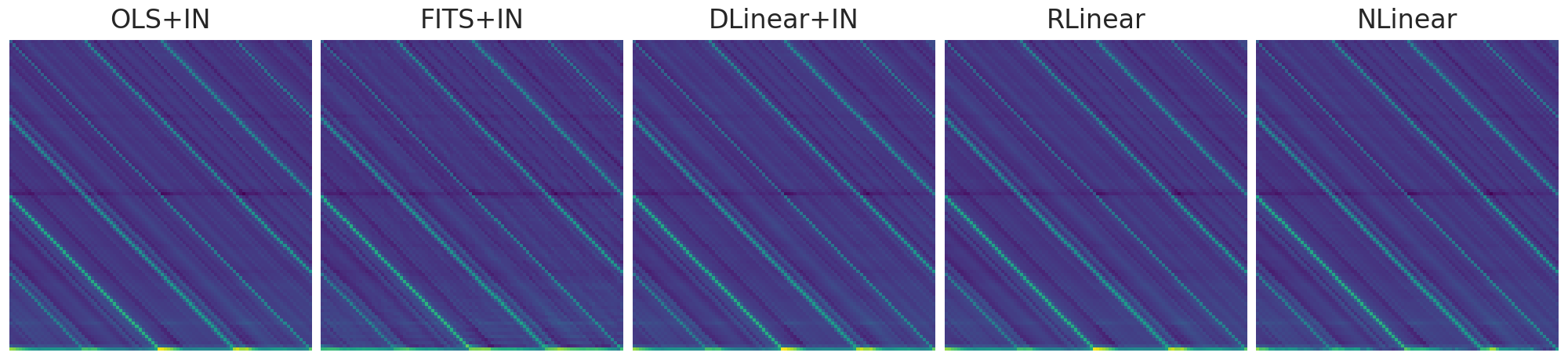}
	\caption{This figure displays the cropped weight matrices after 50 epochs of training for all four models with instance normalization, juxtaposed with their corresponding closed-form solution (extreme left). These show how similar the underlying models are. There are slight differences that affect forecasts to a marginal degree (see Figure \ref{fig:forecasts}).\label{fig:heatmaps}
 }
\end{figure*}

Consider representing the complex vector $\vec{c}$, of dimension $\frac{L+T}{2}$, as a ${(T+L)}$-dimensional real vector. This is achieved by separating the real and imaginary components of $\vec{c}$. Given that the iRFT is a linear mapping, it follows that $\vec{b}$ and $\vec{c}$ are interconnected through the equation $\vec{b} = M\vec{c}$, where $M$ is a specific matrix derived from the iRFT. Critically, the matrix $M$ plays a pivotal role in determining the effective learning rate for the bias in our linear model. For instance, small values within $M$ imply that adjustments in $\Vec{c}$ induce only minor changes in $\vec{b}$. Due the choice of normalisation used for the RFT, the entries of the matrix $M$ are of the order $\sim\frac{1}{\sqrt{L}}$ and FITS manifests this exact phenomena. A detailed breakdown of this may be found in Appendix~\ref{sec:fits-bias}.

\section{Experiments}\label{sec:experiments}
In Section \ref{exp:convergence} we demonstrate that the models discussed in this paper tend toward their corresponding closed form solutions. In Section \ref{exp:performance} we test and compare each model across 8 benchmarking datasets, and show how the \textbf{closed form solution is usually superior}.

\subsection{Convergence}\label{exp:convergence}


\paragraph{Comparison of Learned Weight Matrices} Figure~\ref{fig:heatmaps} visualises the internal weight matrices for 4 trained linear model variants plus the closed-form solution (denoted \textbf{OLS+IN}). The models shown are RLinear, NLinear, DLinear+IN, FITS+IN (the SoTA variant of FITS from \cite{xu2023fits}), and OLS+IN. Each model is trained for 50 epochs\footnote{except for OLS which is solved using an SVD solver in Scikit-learn \cite{sklearn}.} on the ETTh1 dataset with a context of 720 and a prediction horizon length of 336. The weight matrices are then extracted and visualised using the same colour scale. In all cases the learned matrices are near identical. The similarity of the weight matrices for Linear+IN, RLinear, FITS+IN and OLS+IN is precisely in line with our hypothesis and matches the theory and discussion from previous sections. Note that while NLinear lies is a slightly different model class (See Table~\ref{table:model_summary}), the learned matrix is still near identical.

Plotting the learned matrices as in Figure~\ref{fig:heatmaps} requires us to first convert each trained model into the form $f(\vec{x})=A\vec{x}+\vec{b}$. To do this we note that $f(\vec{0}) = A\vec{0}+\vec{b}=\vec{b}$. Thus, the bias can be found by passing the zero vector into the trained model. We can determine $A$ in a similar manner. Let $\vec{e_i}$ denote the $i$\textsuperscript{th} coordinate vector, that is $\vec{e_i}$ is the vector which is 1 at position $i$ and zero elsewhere. Then $f(\vec{e_i})=A\vec{e_i}+\vec{b} = A_{\cdot, i} + \vec{b}$ where $A_{\cdot, i}$ is the $i$\textsuperscript{th} column of $A$. Hence, given that we have already computed the bias term, we may derive $A$ simply by passing through each coordinate vector $\vec{e_i}$ and subtracting $\vec{b}$. The procedure for extracting the weight matrices for models of the form $A\vec{x}+\vec{b}\sigma(x)$ is similar and is discussed in Appendix \ref{sec:extracting_matrices}. 
\begin{figure}[!htbp]
	\centering
	\includegraphics[width=0.95\linewidth]{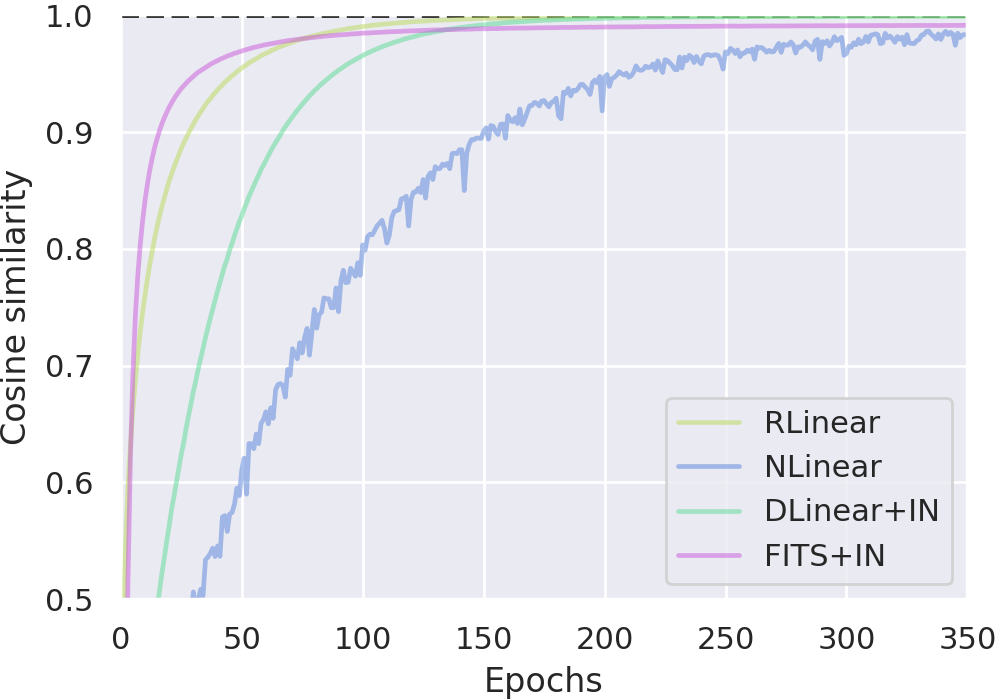}
	\caption{A demonstration of how the model's weight matrices tend to the OLS solution during training. This is a visualised as the cosine similarity between a given model's weight matrix and that determined by the closed form solution. \vspace{-0.5cm}
 \label{fig:cosines}
 }
\end{figure}
\paragraph{Cosine similarity during training} Figure \ref{fig:cosines} tracks the cosine similarity (Defined $d(x,y):=\frac{x\cdot y}{\vert\vert x\vert\vert_2\cdot \vert\vert y\vert\vert_2}$) between the above-mentioned 4 models' weight matrices and their OLS counterpart during training. A cosine similarity of one, corresponds to exact equality between the matrices.  In line with our hypothesis, all model's weight converge toward the OLS solution. The rapidity of this behaviour differs per model, thus demonstrating that SGD optimisation coupled with each unique parameterisation impacts the particularly route taken and rate of convergence. 

\paragraph{Forecasts} Figure~\ref{fig:forecasts} shows the forecasts from these models after 50 epochs of training.  While subtle differences in the models do indeed result in subtle differences in forecasts, there is clear and pervasive similarity between forecasts. 
\begin{figure}[!htbp]
	\centering
	\includegraphics[width=0.99\linewidth]{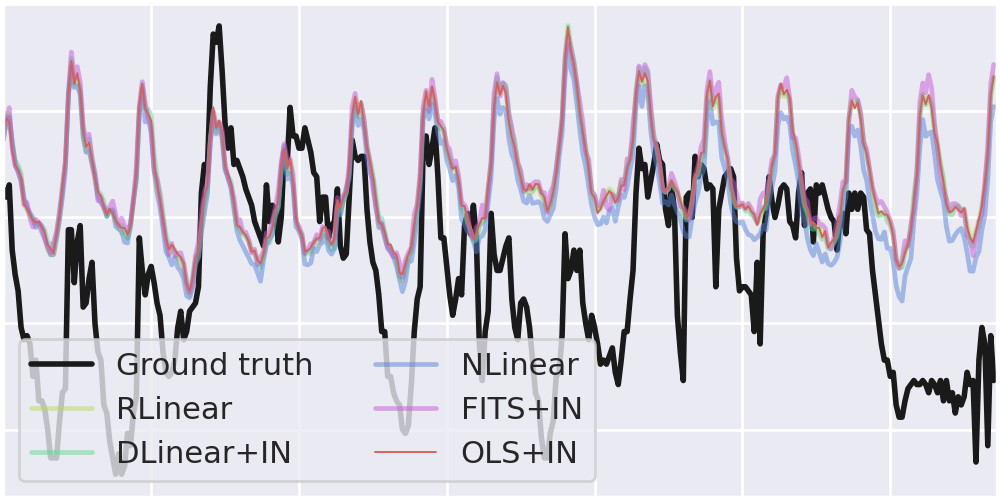}
	\caption{Forecast comparison on ETTh1 with $T=336$, comparing the 5 models that use instance normalisation. \vspace{-0.5cm}
 \label{fig:forecasts}
 }
\end{figure}

\paragraph{Bias Terms} The bias terms for each trained model are visualised in Figure~\ref{fig:biases}. As expected, the models DLinear+IN, RLinear and OLS+IN learn the same bias terms as each other. Notably however, the bias for FITS+IN differs considerably from the other models. Moreover the magnitude of this bias is much smaller. This difference is despite the fact that all these models' classes are equivalent (Table~\ref{table:model_summary}). This confirms our analysis from Section \ref{sec:fitsBias-main}.

\begin{figure}[!htbp]
	\centering
	\includegraphics[width=0.99\linewidth]{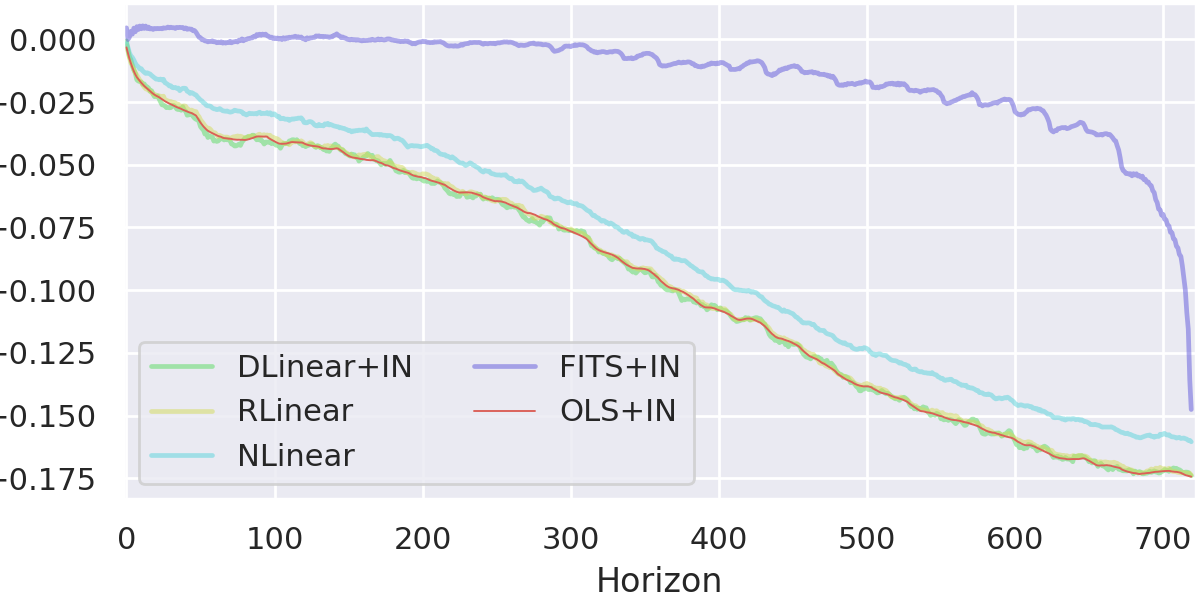}
	\caption{Comparison of the learned bias parameters for several linear models implementing feature normalisation technique. FITS results clearly in a different bias term.\vspace{-0.5cm}
 \label{fig:biases}
 }
\end{figure}

\subsection{Performance}\label{exp:performance}

\begin{table*}[!ht]
\small
\caption{Long-term multivariate forecasting results, showing MSE values for all models investigated in this work. The \textbf{\color{green2}green} and \textbf{\color{blue2}{blue}} highlighting indicate when the OLS is superior and within 1 standard deviation of a given model, respectively. \textbf{Bold}ing indicates the best performing model for a given dataset-horizon combination.}
\label{tab:comparison}
\begin{center}
\setlength{\tabcolsep}{3pt}
\begin{tabular}{| c | c | c | c | c | c V{4} c | c | c | c | c | }
\hline
& & \multicolumn{4}{c V{4}}{Methods \textbf{without} instance normalisation} & \multicolumn{5}{c|}{Methods \textbf{with} instance normalisation}\\

& $T$ & \multicolumn{1}{c}{\textbf{OLS}} & \multicolumn{1}{c}{FITS} & \multicolumn{1}{c}{DLinear} & \multicolumn{1}{c V{4}}{Linear} & \multicolumn{1}{c}{\textbf{OLS+IN}} & \multicolumn{1}{c}{FITS+IN} & \multicolumn{1}{c}{DLinear+IN} & \multicolumn{1}{c}{RLinear} & \multicolumn{1}{c|}{NLinear}  \\
 \hline 
\multirow{4}{0.5em}{\rotatebox{90}{ETTm1}} 
& 96  & \textbf{0.306} & \cellcolor{green}  0.310 $_{\pm 0.0005}$ & \cellcolor{green}  0.311 $_{\pm 0.0008}$ & \cellcolor{green}  0.314 $_{\pm 0.0037}$ & 0.307  & \cellcolor{green} 0.309 $_{\pm 0.0002}$   & \cellcolor{green}   0.312 $_{\pm 0.0008}$ & \cellcolor{green}   0.312 $_{\pm 0.0024}$ & \cellcolor{green}   0.319 $_{\pm 0.0021}$  \\ 
& 192 & \textbf{0.335} & \cellcolor{green}  0.338 $_{\pm 0.0008}$ & \cellcolor{green}  0.342 $_{\pm 0.0014}$ & \cellcolor{green}  0.343 $_{\pm 0.0012}$ & 0.336  & \cellcolor{green} 0.338 $_{\pm 0.0005}$  & \cellcolor{green}   0.341 $_{\pm 0.0014}$ & \cellcolor{green}   0.343 $_{\pm 0.0010}$ & \cellcolor{green}   0.346 $_{\pm 0.0009}$  \\ 
& 336 & \textbf{0.364} & \cellcolor{green}  0.367 $_{\pm 0.0003}$ & \cellcolor{green}  0.372 $_{\pm 0.0006}$ & \cellcolor{green}  0.374 $_{\pm 0.0008}$ & 0.365  & \cellcolor{green} 0.367 $_{\pm 0.0001}$  & \cellcolor{green}   0.372 $_{\pm 0.0006}$ & \cellcolor{green}   0.372 $_{\pm 0.0016}$ & \cellcolor{green}   0.378 $_{\pm 0.0003}$ \\ 
& 720 & \textbf{0.413} & \cellcolor{green}  0.435 $_{\pm 0.0010}$ & \cellcolor{green}  0.422 $_{\pm 0.0016}$ & \cellcolor{green}  0.426 $_{\pm 0.0058}$ & 0.415  & \cellcolor{green} 0.417 $_{\pm 0.0006}$  & \cellcolor{green}  0.422 $_{\pm 0.0016}$ & \cellcolor{green}   0.421 $_{\pm 0.0018}$ & \cellcolor{green}  0.424 $_{\pm 0.0029}$  \\ 
\hline 
\multirow{4}{0.5em}{\rotatebox{90}{ETTm\textbf{}2}} 
& 96  & 0.166 & \cellcolor{white}  0.165 $_{\pm 0.0003}$ & \cellcolor{white}  0.164 $_{\pm 0.0017}$ & \cellcolor{white}  0.163 $_{\pm 0.0010}$ & \textbf{0.162} & \cellcolor{blue} \textbf{0.162} $_{\pm 0.0001}$ & \cellcolor{blue} 0.163 $_{\pm 0.0011}$ & \cellcolor{green}   0.164 $_{\pm 0.0009}$ & \cellcolor{green}   0.164 $_{\pm 0.0009}$ \\ 
& 192 & 0.228 & \cellcolor{white}  0.225 $_{\pm 0.0001}$ &  \cellcolor{white} 0.222 $_{\pm 0.0023}$ & \cellcolor{white}  0.218 $_{\pm 0.0013}$ & \textbf{0.216} & \cellcolor{green}  0.217 $_{\pm 0.0001}$ & \cellcolor{green}   0.217 $_{\pm 0.0004}$ & \cellcolor{green}   0.217 $_{\pm 0.0007}$ & \cellcolor{green}   0.217 $_{\pm 0.0007}$ \\ 
& 336 & 0.295 & \cellcolor{white}  0.291 $_{\pm 0.0008}$ & \cellcolor{white} \textbf{0.267} $_{\pm 0.0029}$ & \cellcolor{white}  0.272 $_{\pm 0.0021}$ & 0.268 & \cellcolor{green}  0.269 $_{\pm 0.0000}$ & \cellcolor{green}  0.269 $_{\pm 0.0007}$ & \cellcolor{green}  0.270 $_{\pm 0.0011}$ & \cellcolor{green}   0.270 $_{\pm 0.0011}$ \\ 
& 720 & 0.415 & \cellcolor{white}  0.409 $_{\pm 0.0004}$ & \cellcolor{white}  0.356 $_{\pm 0.0056}$ & \cellcolor{white}  0.362 $_{\pm 0.0073}$ & \textbf{0.349} & \cellcolor{green}  0.350 $_{\pm 0.0002}$ & \cellcolor{green}  0.354 $_{\pm 0.0016}$ & \cellcolor{green}   0.354 $_{\pm 0.0010}$ & \cellcolor{green}  0.355 $_{\pm 0.0010}$ \\ 
\hline 
\multirow{4}{0.5em}{\rotatebox{90}{ETTh1}} 
& 96  & 0.376 & \cellcolor{green}  0.378 $_{\pm 0.0002}$ & \cellcolor{green} 0.380 $_{\pm 0.0027}$  & \cellcolor{green}  0.390 $_{\pm 0.0016}$ & \textbf{0.375} & \cellcolor{green}  0.377 $_{\pm 0.0002}$ &\cellcolor{green}   0.379 $_{\pm 0.0010}$ & \cellcolor{green}   0.387 $_{\pm 0.0006}$ & \cellcolor{green}   0.383 $_{\pm 0.0027}$\\ 
& 192 & \textbf{0.413} & \cellcolor{blue} \textbf{0.413} $_{\pm 0.0002}$ & \cellcolor{green}  0.424 $_{\pm 0.0045}$  & \cellcolor{green}  0.426 $_{\pm 0.0029}$ & \textbf{0.413} & \cellcolor{blue} \textbf{0.413} $_{\pm 0.0002}$ & \cellcolor{green}  0.419 $_{\pm 0.0018}$ &\cellcolor{green}   0.415 $_{\pm 0.0015}$ & \cellcolor{green}  0.418 $_{\pm 0.0016}$ \\ 
& 336 & 0.448 & \cellcolor{green}  0.500 $_{\pm 0.0014}$ & \cellcolor{green}   0.458$_{\pm 0.0104}$ & \cellcolor{green}  0.465 $_{\pm 0.0044}$ & 0.445 & \cellcolor{white}  \textbf{0.432} $_{\pm 0.0008}$ & \cellcolor{green}  0.451 $_{\pm 0.0020}$ &\cellcolor{green}   0.450 $_{\pm 0.0007}$ &\cellcolor{green}   0.446 $_{\pm 0.0006}$\\ 
& 720 & 0.491 & \cellcolor{green}  0.506 $_{\pm 0.0062}$ & \cellcolor{green}  0.522 $_{\pm 0.0051}$ & \cellcolor{green}  0.512 $_{\pm 0.0017}$ & 0.460 & \cellcolor{white}  \textbf{0.428} $_{\pm 0.0002}$ &\cellcolor{green}   0.470 $_{\pm 0.0013}$ &\cellcolor{blue}  0.460 $_{\pm 0.0074}$ & \cellcolor{green}   0.464 $_{\pm 0.0006}$\\ 
\hline 
\multirow{4}{0.5em}{\rotatebox{90}{ETTh2}} 
& 96  & 0.309 & \cellcolor{white}   0.307 $_{\pm 0.0008}$ & \cellcolor{white}   \textbf{0.276} $_{\pm 0.0013}$ & \cellcolor{white}  0.277 $_{\pm 0.0119}$ & 0.270 & \cellcolor{blue} 0.270 $_{\pm 0.0001}$ & \cellcolor{green}  0.275 $_{\pm 0.0002}$ & \cellcolor{green}  0.272 $_{\pm 0.0015}$ & \cellcolor{green}  0.279 $_{\pm 0.0020}$ \\ 
& 192 & 0.423 & \cellcolor{green}   0.447 $_{\pm 0.0019}$ & \cellcolor{white}  
 0.351 $_{\pm 0.0140}$ & \cellcolor{white}  0.351 $_{\pm 0.0123}$ & \textbf{0.331} & \cellcolor{blue} \textbf{0.331} $_{\pm 0.0000}$ & \cellcolor{green}  0.342 $_{\pm 0.0025}$ &\cellcolor{green}   0.335 $_{\pm 0.0010}$ &\cellcolor{green}   0.343 $_{\pm 0.0026}$  \\ 
& 336 & 0.540 & \cellcolor{green}   0.566 $_{\pm 0.0014}$ & \cellcolor{white}   0.424 $_{\pm 0.0139}$ & \cellcolor{white}  0.455 $_{\pm 0.0053}$ & \textbf{0.353} & \cellcolor{green}  0.354 $_{\pm 0.0001}$ &\cellcolor{green}   0.359 $_{\pm 0.0062}$ & \cellcolor{green}  0.357 $_{\pm 0.0011}$ &\cellcolor{green}   0.383 $_{\pm 0.0028}$ \\ 
& 720 & 0.900 & \cellcolor{green}  0.971 $_{\pm 0.0018}$ & \cellcolor{white}  
 0.664 $_{\pm 0.0400}$ & \cellcolor{white}  0.619 $_{\pm 0.0185}$ & 0.380 & \cellcolor{white}  \textbf{0.377} $_{\pm 0.0001}$ & \cellcolor{green}  0.384 $_{\pm 0.0001}$ &\cellcolor{green}   0.384 $_{\pm 0.0012}$ & \cellcolor{green}  0.406 $_{\pm 0.0054}$ \\ 
\hline 
\multirow{4}{0.5em}{\rotatebox{90}{ECL}} 
& 96  & \textbf{0.133} & \cellcolor{green}  0.134 $_{\pm 0.0002}$ & \cellcolor{green}  
 0.134 $_{\pm 0.0001}$ & \cellcolor{blue} \textbf{0.133} $_{\pm 0.0002}$ & \textbf{0.133} & \cellcolor{blue} \textbf{0.133} $_{\pm 0.0001}$ & \cellcolor{green}   0.134 $_{\pm 0.0001}$ & \cellcolor{green}   0.134 $_{\pm 0.0001 }$ & \cellcolor{green}  0.134 $_{\pm 0.0002}$ \\ 
& 192 & \textbf{0.147} & \cellcolor{green}  0.148 $_{\pm 0.0001}$ & \cellcolor{green}  
 0.148 $_{\pm 0.0005}$ & \cellcolor{green}   0.148 $_{\pm 0.0001}$ & 0.148 & \cellcolor{blue} 0.148 $_{\pm 0.0000}$ & \cellcolor{green}   0.149 $_{\pm 0.0000}$ & \cellcolor{blue}  0.148 $_{\pm 0.0000}$ & \cellcolor{blue} 0.149 $_{\pm 0.0002}$ \\ 
& 336 & \textbf{0.162} & \cellcolor{green}  0.164 $_{\pm 0.0002}$ & \cellcolor{green}  
 0.164 $_{\pm 0.0009}$ &\cellcolor{green}   0.163 $_{\pm 0.0001}$ & 0.164 & \cellcolor{blue} 0.164 $_{\pm 0.0001}$ & \cellcolor{green}  0.165 $_{\pm 0.0001}$ & \cellcolor{green}  0.165 $_{\pm 0.0001}$ & \cellcolor{blue} 0.165 $_{\pm 0.0001}$ \\ 
& 720 & \textbf{0.197} & \cellcolor{green}  0.200 $_{\pm 0.0001}$ & \cellcolor{blue} \textbf{0.197} $_{\pm 0.0034}$ & \cellcolor{green}  0.198 $_{\pm 0.0002}$ & 0.203 & \cellcolor{blue} 0.203 $_{\pm 0.0000}$ & \cellcolor{green}  0.205 $_{\pm 0.0003}$ & \cellcolor{green}  0.204 $_{\pm 0.0001}$ & \cellcolor{green}  0.205 $_{\pm 0.0002}$ \\ 
\hline 
\multirow{4}{0.5em}{\rotatebox{90}{Traffic}} 
& 96  & \textbf{0.385} & \cellcolor{green}  0.386 $_{\pm 0.0003}$ &  \cellcolor{green}  
 0.387 $_{\pm 0.0003}$ & \cellcolor{green} 0.386 $_{\pm 0.0005}$ & \textbf{0.385} & \cellcolor{green}  0.386 $_{\pm 0.0002}$ & \cellcolor{green}  0.387 $_{\pm 0.0002}$ & \cellcolor{green} 0.386 $_{\pm 0.0004}$ & \cellcolor{green} 0.387 $_{\pm 0.0003}$ \\ 
& 192 & \textbf{0.396} & \cellcolor{green}  0.397 $_{\pm 0.0001}$ &  \cellcolor{green}   0.398 $_{\pm 0.0001}$ & \cellcolor{green} 0.398 $_{\pm 0.0003}$ & 0.397 & \cellcolor{green}  0.398 $_{\pm 0.0001}$ & \cellcolor{green}  0.399 $_{\pm 0.0003}$ & \cellcolor{green} 0.397 $_{\pm 0.0004}$ & \cellcolor{green} 0.398 $_{\pm 0.0000}$ \\ 
& 336 & \textbf{0.410} & \cellcolor{green}  0.411 $_{\pm 0.0001}$ & \cellcolor{green}  0.412 $_{\pm 0.0001}$ & \cellcolor{green} 0.412 $_{\pm 0.0002}$ & \textbf{0.410} & \cellcolor{green}  0.411 $_{\pm 0.0001}$ & \cellcolor{green}  0.412 $_{\pm 0.0005}$ & \cellcolor{green} 0.412 $_{\pm 0.0000}$ & \cellcolor{green} 0.412 $_{\pm 0.0000}$ \\ 
& 720 & 0.450 & \cellcolor{blue} 0.450 $_{\pm 0.0002}$ & \cellcolor{blue} 0.450 $_{\pm 0.0006}$ & \cellcolor{green} 0.451 $_{\pm 0.0003}$ & \textbf{0.448} & \cellcolor{green}  0.449 $_{\pm 0.0001}$ & \cellcolor{green}  0.450 $_{\pm 0.0002}$ & 0.449 $_{\pm 0.0002}$ &  \cellcolor{green} \cellcolor{green} 0.451 $_{\pm 0.0000}$ \\ 
\hline 
\multirow{4}{0.5em}{\rotatebox{90}{Weather}} 
& 96  & 0.142 & \cellcolor{green}  0.144 $_{\pm 0.0002}$ & \cellcolor{green}  0.145 $_{\pm 0.0017}$ & \cellcolor{green}  0.145 $_{\pm 0.0011}$ & \textbf{0.141} &\cellcolor{green}  0.142 $_{\pm 0.0000}$ & \cellcolor{green} 0.142 $_{\pm 0.0006}$ &\cellcolor{green}  0.143 $_{\pm 0.0005}$ & \cellcolor{green} 0.144 $_{\pm 0.0004}$  \\ 
& 192 & 0.185 & \cellcolor{green}  0.188 $_{\pm 0.0013}$ & \cellcolor{green}  0.188 $_{\pm 0.0029}$ & \cellcolor{green}  0.189 $_{\pm 0.0028}$ & \textbf{0.184} & \cellcolor{green} 0.185 $_{\pm 0.0001}$ & \cellcolor{green} 0.185 $_{\pm 0.0008}$ & \cellcolor{green} 0.185 $_{\pm 0.0007}$ & \cellcolor{green} 0.187 $_{\pm 0.0010}$  \\ 
& 336 & 0.235 & \cellcolor{green}  0.238 $_{\pm 0.0003}$ & \cellcolor{blue} 0.235 $_{\pm 0.0004}$ & \cellcolor{green}  0.238 $_{\pm 0.0019}$ & \textbf{0.234} &\cellcolor{green}  0.236 $_{\pm 0.0001}$ & \cellcolor{green} 0.235 $_{\pm 0.0003}$ & \cellcolor{green} 0.235 $_{\pm 0.0005}$ & \cellcolor{green} 0.235 $_{\pm 0.0002}$  \\ 
& 720 & \textbf{0.304} & \cellcolor{blue} \textbf{0.304} $_{\pm 0.0004}$ & \cellcolor{green}  
 0.308 $_{\pm 0.0005}$ & \cellcolor{green}  0.310 $_{\pm 0.0018}$ & 0.307 &\cellcolor{blue} 0.307 $_{\pm 0.0001}$ & \cellcolor{green} 0.310 $_{\pm 0.0004}$ & \cellcolor{green} 0.309 $_{\pm 0.0006}$ & \cellcolor{green} 0.311 $_{\pm 0.0003}$  \\ 
\hline 
\multirow{4}{0.5em}{\rotatebox{90}{Exchange}} 
& 96  & 0.091 & \cellcolor{green}  0.099 $_{\pm 0.0009}$ & \cellcolor{white}  \textbf{0.084} $_{\pm 0.0003}$ & \cellcolor{green}  0.100 $_{\pm 0.0097}$ & 0.086 & \cellcolor{green}  0.087 $_{\pm 0.0001}$ & \cellcolor{white}   0.085 $_{\pm 0.0003}$ &\cellcolor{blue} 0.086 $_{\pm 0.0006}$ & \cellcolor{green}    0.090 $_{\pm 0.0008}$  \\ 
& 192 & 0.217 & \cellcolor{green}  0.243 $_{\pm 0.0032}$ & \cellcolor{white}  \textbf{0.160} $_{\pm 0.0088}$ & \cellcolor{white}  0.161 $_{\pm  0.0012}$ & 0.180 & \cellcolor{green}  0.183 $_{\pm 0.0005}$ & \cellcolor{blue}   0.178 $_{\pm 0.0028}$ & \cellcolor{blue}  0.179 $_{\pm 0.0024}$ & \cellcolor{green}   0.187 $_{\pm 0.0034}$  \\ 
& 336 & 0.450 & \cellcolor{green}  0.498 $_{\pm 0.0026}$ & \cellcolor{white}  \textbf{0.315} $_{\pm 0.0070}$ & \cellcolor{white}  0.323 $_{\pm 0.0126}$ & 0.343 & \cellcolor{green}  0.344 $_{\pm 0.0011}$ & \cellcolor{white}  0.335 $_{\pm 0.0031}$ & \cellcolor{white}  0.334 $_{\pm 0.0040}$ & \cellcolor{green}   0.347 $_{\pm 0.0024}$  \\ 
& 720 & 1.392 & \cellcolor{white}  1.256 $_{\pm 0.0083}$ & \cellcolor{white}  0.929 $_{\pm 0.0218}$ & \cellcolor{white}  \textbf{0.717} $_{\pm 0.1699}$ & 0.992 & \cellcolor{white}  0.965 $_{\pm 0.0010}$ & \cellcolor{white}  0.920 $_{\pm 0.0219}$ & \cellcolor{white}  0.948 $_{\pm 0.0082}$ & \cellcolor{green}   1.035 $_{\pm 0.0130}$  \\ 
\hline 
\end{tabular}
\end{center}
\end{table*}

Table \ref{tab:comparison} presents the Mean Squared Error (MSE) values, accompanied by error bars, for the models evaluated in this study, both with and without instance normalization.\footnote{We included NLinear in the grouping `with' instance normalisation, even though the model classes is slightly different.} In the table, \textbf{\color{green2}green} highlighting signifies instances where the Ordinary Least Squares (OLS) solution achieves a lower MSE compared to the model being evaluated. Conversely, \textbf{\color{blue2}{blue}} highlighting denotes cases where the differences are within one standard deviation.


Table~\ref{tab:comparison} shows that the linear models are generally outperformed by their corresponding OLS solution ($72
\%$ of settings). It is interesting that the OLS solution usually outperforms those trained with SGD and early stopping, particularly given that the OLS solutions are purely linear regression (not ridge or lasso regression), meaning that there is no regularisation. The comparably strong performance of the closed-form solution on larger datasets (ECL, Traffic, and Weather) suggests that a linear model may not have sufficient representational capacity in this setting. 

Conversely, FITS performs particularly well on the hourly ETT dataset (ETTh1 and h2). We believe that the reason for this is owed to the fact that these datasets are small, such that overfitting can occur rapidly. Since FITS inadvertently imposes a restriction on the bias parameter (see Section \ref{sec:fitsBias-main} and Figure \ref{fig:biases}), it is less prone to this overfitting restriction. 

\begin{tcolorbox}[boxsep=3pt,left=0pt,right=0pt,top=0pt,bottom=0pt,halign=center,colback=blueback,colframe=blueout]
\small
OLS solutions were superior across 23 of 32 (72\%) settings.
\end{tcolorbox}
\section{Conclusion}
Simple linear models are often on par, or better, than complex or deep models for time series forecasting. Thus, much energy has thus been spent on `modernising' linear regression for time series forecasting: modelling separately trends and residuals (DLinear), applying some form of instance normalisation (RLinear, NLinear), or by processing in Fourier space (FITS). We have shown in this paper that, from a functional standpoint, these alterations barely deviate these models from standard linear regression. We demonstrated empirically that these model behave and perform similarly to each other and generally worse than their closed-form solutions. A full discussion of the limitations and future of this work may be found in Appendix \ref{sec:limitations}.





\bibliography{example_paper}
\bibliographystyle{icml2024}

\newpage
\appendix
\onecolumn
\section{Appendix}

\subsection{FITS}\label{sec:fitsProof}
This section is dedicated to fully unpacking the FITS model and proving Theorem~\ref{theorem:fits}.

\textbf{FITS Model Definition:} Let $\vec{x}\in \mathbb{R}^L$ be a context vector. FITS applies the Real (discrete) Fourier Transform (RFT) to $\vec{x}$. This maps $\vec{x}$ to a complex vector of length $\floor{L/2}+1$. After this one applies a Low-Pass Filter (LPF), zeroing out the high frequency components. Next one applies a learnable complex linear layer. The output is padded with zeros and the result is passed though the inverse RFT, mapping to $\mathbb{R}^{L+T}$. The result is then scaled by $\frac{L+T}{L}$.

Throughout this section we will assume both the prediction horizon length $T$ and the context length $L$ are even. This will avoid over-cluttered expressions involving the floor functions. Moreover this condition holds for every experiment setting in the original paper \cite{xu2023fits}.

\textbf{Remark:} FITS is a state-of-art model. One of the goals of this paper is to understand the superior performance of this model given that it is simply a composition of linear operations. For this reason we restrict our analysis to those settings which give SoTA performance. To this end we ignore the LPF entirely in our subsequent analysis. While the LPF is an effective tool for compressing FITS, it comes with a performance degradation. 

In order to fully analyse FITS it is critical to introduce definitions for the Discrete and Real Fourier transforms.

\begin{definition}[Discrete Fourier Transform]\label{def:dft}
Let $\vec{x}\in \mathbb{R}^L$, we define the \textbf{Discrete Fourier Transform (DFT)} of $\vec{x}$ as $DFT_L:\mathbb{C}^L\rightarrow \mathbb{C}^L$ so that for $ j\in \{0,1,\ldots \floor{L} \}$
\begin{align}
    DFT_L(\vec{x})_j := \sum_{k=0}^{L-1} e^{\frac{-2\pi i kj}{L}}x_k
\end{align}
\end{definition}
The DFT can be written in matrix form $DFT_L(\vec{x}) = D_L\vec{x}$ where, if $\omega$ denotes the $L$\textsuperscript{th} root of unity ($\omega := e^{\frac{-2\pi i}{L}}$), then $D_L$ is the matrix:
\begin{align}\label{eqn:DL}
    D_L := \begin{bmatrix}
    1 & 1 & 1 & \ldots & 1\\
    1 & \omega & \omega^2 & \ldots & \omega^{L-1}\\
    \hdotsfor{5}\\
    1 & \omega^{L-1} & \omega^{2(L-1)} & \ldots & \omega^{L(L-1)}
\end{bmatrix}
\end{align}

The DFT is an invertible map and we define the \textbf{inverse DFT} (iDFT) by $iDFT(\vec{x}) := D_L^{-1}\vec{x}$ where $D_L^{-1} = \frac{1}{L}D_L^{\star}$.

FITS does not directly employ the DFT. Rather, it employs a closely related transform called the Real Fourier Transform, which we now define. 

\begin{definition}[Real Discrete Fourier Transform]\label{def:rfft}
Let $\vec{x}\in \mathbb{R}^L$, we define the \textbf{Real Discrete Fourier Transform (RFT)} of $\vec{x}$ as $RFT_L:\mathbb{R}^L\rightarrow \mathbb{C}^{\floor{L/2}+1}$ so that for $ j\in \{0,1,\ldots \floor{L/2} \}$
\begin{align}
    RFT_L(\vec{x})_j := \sum_{k=0}^{L-1} e^{\frac{-2\pi i kj}{L}}x_k
\end{align}
In other words, the RFT and DFT are identical other than the RFT is a truncated version that discards the last $\floor{L/2}-1$ components. The motivation for this comes from the fact that when $\vec{x}$ is real, then the $k$\textsuperscript{th} and $(L-k)$\textsuperscript{th} components of the DFT are complex conjugates ($DFT(\vec{x})_j = DFT(\vec{x})^{\star}_{L-j} $). For this reason these components contain the same information in that the original signal may be entirely reconstructed from the first $\floor{L/2}+1$ components via $(Y_0, Y_1, \ldots, Y_{\floor{L/2}+1})\mapsto iDFT(Re(Y_0), Y_1, Y_2, \ldots, Re(Y_{\floor{L/2}}), Y_{\floor{L/2}-1}^{\star}, \ldots , Y_1^{\star})$. We call this map the \textbf{inverse-RFT (iRFT)}. 

When $\vec{Y}$ has been obtained by taking the RFT of some real vector then $Y_0, Y_{\floor{L/2}}\in\mathbb{R}$ thus, taking the real part of these components, $Re(Y_0), Re(Y_{\floor{L/2}})$, does nothing. However writing the inverse like this allows us to also take the inverse RFT of complex vectors $\vec{Y}\in\mathbb{C}^{\floor{L/2}}$ which do not lie in the image $RFT(\mathbb{R}^L)$. 
\end{definition}

We can make the relationship between the DFT and RFT more explicit by defining the following linear map. 
\begin{definition}\label{def:pi}
Define the projection \(\Pi_L:\mathbb{C}^L \rightarrow \mathbb{C}^{\frac{L}{2}+1}\)
\[
\Pi_L(Y_0, Y_1, \ldots, Y_{L-1}) := (Y_0, Y_1, \ldots, Y_{\frac{L}{2}})
\]
Let \(\vec{Y} = (Y_0, Y_1, \ldots, Y_{\frac{L}{2}})\). Then, the inverse map \(\Pi^{-1}_L:\mathbb{C}^{\frac{L}{2}+1} \rightarrow \mathbb{C}^L\) is defined as
\[
\Pi^{-1}_L(\vec{Y}) = (Re(Y_0), Y_1, \ldots, Re(Y_{\frac{L}{2}}), Y^{\star}_{\frac{L}{2}-1}, \ldots, Y^{\star}_1)
\]
\end{definition}

Example: $\Pi_{L+T}^{-1}(Y_0 , Y_1 , Y_2 , Y_3) = (\frac{Y_0+Y_0^{\star}}{2} , Y_1 , Y_2 , \frac{Y_3+Y_3^{\star}}{2}, Y_2^{\star} , Y_1^{\star})$


\begin{remark}\label{remark:dft-pi}
Using this transformation one may express $RFT_L = \Pi_L \circ D_L$ and $iRFT_L = D_L^{-1}  \circ  \Pi^{-1}_L$
\end{remark}

\begin{remark}\label{remark:pi-inverse-property} For any $\vec{Y} \in DFT_L(\mathbb{R}^L)$ one may confirm that $\Pi^{-1}_L\circ \Pi_L = id_L$. Likewise, for any  $\vec{Y} \in RFT_L(\mathbb{R}^L)$ one may confirm that $\Pi_L\circ \Pi^{-1}_L = id_{\frac{L}{2}+1}$
\end{remark}


Having explicitly defined the discrete and discrete real Fourier transforms we are ready to begin the process of proving Theorem~\ref{theorem:fits} which we restate below. 

\begin{theorem} [FITS]
Let $M(\text{FITS})$ denote the FITS model class, i.e. the set of functions $f:\mathbb{R}^L\rightarrow\mathbb{R}^{T}$ which can be represented as a FITS model. When $L\geq T-2$, $M(\text{FITS})$ is precisely equal to the space of affine linear functions $A\vec{x} + \vec{b}$. 
\end{theorem}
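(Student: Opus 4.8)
The plan is to prove the two inclusions separately, the forward one being immediate and the reverse one carrying all the content. Since FITS is $RFT_L$ followed by a complex affine layer followed by $iRFT_{L+T}$ and a truncation to the last $T$ coordinates, and each of these is an $\mathbb{R}$-linear (resp.\ affine) map once complex numbers are viewed as pairs of reals, every FITS model has the form $\vec{x}\mapsto A\vec{x}+\vec{b}$; this gives $M(\text{FITS})\subseteq M(\text{Linear})$ with no hypothesis on $L,T$. For the reverse inclusion I would exploit that the linear part depends only on the complex weight matrix $W$ and the constant part only on the complex bias $\vec{c}$, so $A$ and $\vec{b}$ can be matched independently. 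The bias is easy: setting $\vec{x}=\vec{0}$ gives $\vec{b}=$ (last $T$ entries of) $iRFT_{L+T}(\vec{c})$, and since $iRFT_{L+T}$ maps $\mathbb{C}^{\frac{L+T}{2}+1}$ onto $\mathbb{R}^{L+T}$ (the $\mathrm{Re}(\cdot)$ operations in $\Pi^{-1}$ only discard the two redundant imaginary parts), $\vec{b}$ ranges over all of $\mathbb{R}^{T}$. The crux is therefore the weight matrix.

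For the weight matrix I would first obtain a clean frequency-domain criterion for which real maps are FITS-expressible. Writing $RFT_L=\Pi_L D_L$ and $iRFT_{L+T}=D_{L+T}^{-1}\Pi_{L+T}^{-1}$ (Remark~\ref{remark:dft-pi}) and using that $D_L\vec{x}$ is conjugate-symmetric for real $\vec{x}$, I would show that a real $(L+T)\times L$ matrix $G$ (the full reconstruction-plus-forecast map) arises as the linear part of a FITS model iff, for each output frequency $k\in\{1,\dots,\tfrac{L+T}{2}-1\}$, the $k$-th row of $D_{L+T}G$ lies in the row space $\mathcal{R}$ spanned by the first $\tfrac{L}{2}+1$ rows of $D_L$. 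The $k=0$ and $k=\tfrac{L+T}{2}$ rows are unconstrained (only their real parts are seen, and $\mathrm{Re}$ of the relevant $\mathbb{C}$-span is all of $(\mathbb{R}^{L})^{*}$), and the remaining rows are fixed by conjugate symmetry. The forecast matrix $A$ is then the last $T$ rows of $G$, the first $L$ (reconstruction) rows being free.

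It remains to show that the ``last $T$ rows'' projection of this expressible subspace is all of $\mathbb{R}^{T\times L}$. I would argue by annihilator: suppose a real matrix $B$ supported on the last $T$ rows is orthogonal to every expressible $G$. The defining constraints are the functionals $G\mapsto\langle u_k\otimes v_j,\,G\rangle$ with $u_k=(\omega_{L+T}^{kn})_n$ for $k\in\{1,\dots,\tfrac{L+T}{2}-1\}$ and $v_j=\overline{\mathrm{row}_j(D_L)}$ for $j\in\{\tfrac{L}{2}+1,\dots,L-1\}$, so $B=\mathrm{Re}\big(\sum_k u_k\otimes w_k\big)$ for some $w_k$ in the $\mathbb{C}$-span $V'$ of those $v_j$. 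Vanishing of the first $L$ rows of $B$ says $\zeta_n:=\sum_k \omega_{L+T}^{kn}w_k$ is purely imaginary for $n=0,\dots,L-1$; but $\zeta_n\in V'$, and $V'$ meets its own conjugate only at $0$ (the index sets $\{\tfrac{L}{2}+1,\dots,L-1\}$ and $\{1,\dots,\tfrac{L}{2}-1\}$ are disjoint), so a purely imaginary element of $V'$ must vanish, forcing $\zeta_n=0$. Reading this off componentwise in the input index gives, for each coordinate, the Vandermonde system $\sum_{k=1}^{\frac{L+T}{2}-1} c_k\,\omega_{L+T}^{kn}=0$ for $n=0,\dots,L-1$.

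The main obstacle -- and the single place the hypothesis enters -- is exactly this Vandermonde step: the $L\times(\tfrac{L+T}{2}-1)$ matrix with distinct nodes $\omega_{L+T}^{k}$ has full column rank precisely when $\tfrac{L+T}{2}-1\le L$, i.e.\ when $L\ge T-2$, in which case all $w_k=0$, hence $B=0$ and the projection is surjective. (When $L<T-2$ there are genuinely more constrained output frequencies than free input samples, and the forecast block ceases to be unrestricted.) The delicate points to get right are the non-$\mathbb{C}$-linearity of $\Pi^{-1}$ at the DC and Nyquist frequencies and the asymmetry between $\mathcal{R}$ and $\overline{\mathcal{R}}$; the evenness of $L$ and $T$ assumed throughout (matching \citet{xu2023fits}) is used only to suppress floor functions. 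Combining the free bias with this surjectivity realises every affine $A\vec{x}+\vec{b}$, completing the reverse inclusion.
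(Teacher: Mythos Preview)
Your argument is correct and reaches the same Vandermonde endpoint as the paper, but by a genuinely different (dual) route. The paper works \emph{column-wise in the input frequency}: it writes the linear part as $D_{L+T}^{-1}\,T(W)\,D_L$, gives an explicit entry-by-entry description of $T(W)$ (Lemma~\ref{lemma:matrix_charac}), then decomposes the resulting matrix family by columns and shows that the projection $P$ onto the last $T$ rows is onto by checking that a $T\times\big(\tfrac{L+T}{2}+1\big)$ Vandermonde submatrix of $D_{L+T}^{-1}$ has full row rank. You instead work \emph{row-wise in the output frequency}: you characterise the expressible real $G$'s by the constraint that rows $1,\dots,\tfrac{L+T}{2}-1$ of $D_{L+T}G$ lie in $\mathcal R$, and then run an annihilator argument, using $V'\cap\overline{V'}=\{0\}$ to upgrade ``purely imaginary'' to ``zero'' and a $L\times\big(\tfrac{L+T}{2}-1\big)$ Vandermonde system to kill the coefficients.

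What each buys: the paper's column decomposition is more constructive---given a target $A$ you can trace back through the explicit formula for $T(W)$ to exhibit the complex weight---at the cost of the rather heavy casework in Lemma~\ref{lemma:matrix_charac}. Your duality argument is cleaner in that it never needs the full seven-case description of $T(W)$; it only needs the qualitative facts (which rows are free, which rows carry the $\mathcal R$-constraint, which are determined by conjugacy), and the key obstruction $V'\cap\overline{V'}=\{0\}$ is a one-line observation about disjoint index sets. Both proofs identify the hypothesis $L\ge T-2$ with the same Vandermonde column-rank inequality $\tfrac{L+T}{2}-1\le L$ (equivalently, the paper's row-rank inequality $\tfrac{L+T}{2}+1\ge T$), so the ``where the hypothesis enters'' step is essentially identical.
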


As a composition of affine linear operation, FITS is itself an affine linear model. As a result any FITS model may be expressed in the form $A\vec{x} + \vec{b}$. The remainder of this section is dedicated to showing that when $L\geq T-2$ that $A$ and $\vec{b}$ are unconstrained meaning that FITS model class is equivalent to unconstrained linear regression. Before this we present a prescription, showing how one may obtain $A,\vec{b}$, given the complex bias and weight matrix from FITS's linear layer. 

\begin{lemma}\label{lemma:prescription}
    Let $f:\mathbb{R}^{L}\rightarrow\mathbb{R}^{L+T}$ be some FITS model. Let $W, \vec{c}$ be the weight matrix and bias of the complex linear layer in this model. Then we can express $f$ as a real affine linear map $f(x) = A\vec{x}+\vec{b}$ where $A = D_{L+T}^{-1}\circ\Pi_{L+T}^{-1} W\circ \Pi_L\circ D_L$ and $\vec{b} = iRFT(\vec{c})$
\end{lemma}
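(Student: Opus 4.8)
The plan is to write the FITS map as the explicit composition it is by definition and then peel off the bias using linearity. By the model definition, a FITS model sends $\vec{x}$ to $iRFT_{L+T}\big(W\cdot RFT_L(\vec{x}) + \vec{c}\big)$, where $W,\vec{c}$ are the weight matrix and bias of the complex linear layer. First I would substitute the factorisations from Remark~\ref{remark:dft-pi}, namely $RFT_L = \Pi_L \circ D_L$ and $iRFT_{L+T} = D_{L+T}^{-1} \circ \Pi_{L+T}^{-1}$, to obtain
\[
f(\vec{x}) = D_{L+T}^{-1}\Big(\Pi_{L+T}^{-1}\big(W\,\Pi_L D_L \vec{x} + \vec{c}\big)\Big).
\]

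Next I would split the affine layer into its linear and constant parts and push both through $\Pi_{L+T}^{-1}$ and $D_{L+T}^{-1}$. The key structural observation is that, although $\Pi_{L+T}^{-1}$ is defined via $Re(\cdot)$ and complex conjugation --- and so is only real-linear, not complex-linear --- it is nonetheless additive, since taking real parts and conjugating each respect addition. Hence $\Pi_{L+T}^{-1}(\vec{u}+\vec{v}) = \Pi_{L+T}^{-1}(\vec{u}) + \Pi_{L+T}^{-1}(\vec{v})$, and applying $D_{L+T}^{-1}$ (which is genuinely complex-linear, hence additive) distributes over the resulting sum. This yields $f(\vec{x}) = A\vec{x} + \vec{b}$ with
\[
A = D_{L+T}^{-1}\circ\Pi_{L+T}^{-1}\circ W \circ \Pi_L \circ D_L, \qquad \vec{b} = D_{L+T}^{-1}\Pi_{L+T}^{-1}\vec{c} = iRFT_{L+T}(\vec{c}),
\]
exactly as claimed, where the final equality is again Remark~\ref{remark:dft-pi}.

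The main thing to be careful about --- and the only real subtlety --- is precisely this real- versus complex-linearity of $\Pi_{L+T}^{-1}$. I would emphasise that the affine decomposition survives because \emph{additivity}, rather than full complex-linearity, is all that is needed to separate the bias from the weight contribution. A secondary point, routine but worth verifying, is that both $A\vec{x}$ and $\vec{b}$ are genuinely real-valued so that $f$ indeed lands in $\mathbb{R}^{L+T}$: this holds because $\Pi_{L+T}^{-1}$ restores conjugate (Hermitian) symmetry and reals the zeroth and Nyquist components, placing its output in $D_{L+T}(\mathbb{R}^{L+T})$, on which $D_{L+T}^{-1}$ returns a real vector. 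Any constant rescaling appearing in the FITS definition would merely multiply $A$ and $\vec{b}$ by the same factor and leaves the argument untouched.
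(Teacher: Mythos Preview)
Your argument is correct and essentially matches the paper's: the paper also writes $f$ as the composition $iRFT\circ(\,\cdot\mapsto W\cdot+\vec{c}\,)\circ RFT$, identifies $\vec{b}=f(\vec{0})=iRFT(\vec{c})$, and then uses (real-)linearity of $iRFT$ to peel off $A\vec{x}=iRFT(W\,RFT(\vec{x}))=D_{L+T}^{-1}\Pi_{L+T}^{-1}W\Pi_L D_L\vec{x}$. Your direct distribution via additivity is the same mechanism as the paper's subtract-the-bias step, and you are in fact more explicit than the paper about the real- versus complex-linearity of $\Pi_{L+T}^{-1}$, which the paper elides under ``linearity of the iRFT''.
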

\begin{proof}
    As discussed before, as a composition of affine linear operation, FITS is itself an affine linear model. As a result any FITS model may be expressed in the form $A\vec{x} + \vec{b}$. One may recover the bias by applying $f$ to the zero vector by noting that $f(\vec{0})=A\vec{0}+\vec{b}=\vec{b}$. $f$ is a composition of the RFT, the complex affine map $\vec{x}\mapsto W\vec{x}+\vec{c}$ and an iRFT. Since the RFT maps zero to zero then $f(\vec{0})=iRFT(W\vec{0}+\vec{c}) = iRFT(\vec{c})$ as desired. 

    We know therefore that $A\vec{x} = A\vec{x} + \vec{b} - \vec{b} = iRFT(W\vec{z}+\vec{c}) -iRFT(\vec{c})$ where $\vec{z}:= RFT(\vec{x})$. Using the linearity of the iRFT we have $A\vec{x}= iRFT(W\vec{z}+\vec{c}-\vec{c}) = iRFT(W\vec{z}) = iRFT\circ W \circ RFT\vec{x}$. By writing the RFT and iRFT in terms of the DFT and the operator $\Pi$ as in Remark~\ref{remark:dft-pi} concludes our proof of Lemma~\ref{lemma:prescription}.
\end{proof}

\begin{mdframed}[linecolor=black,linewidth=1pt]
\textbf{Proof Structure for Theorem~\ref{theorem:fits}}: Our goal is to demonstrate that for $L\geq T-2$, any affine map $\vec{x}\mapsto A\vec{x}+\vec{b}$ can be represented using a FITS architecture. We achieve this by characterizing the set of matrices representable within a FITS framework. The detailed characterization is presented in Lemma~\ref{lemma:matrix_charac} and Lemma~\ref{lemma:fits_auxil}, which will be introduced subsequently. In Lemma~\ref{lemma:matrix_charac}, we introduce a specific set of linear maps, illustrating their formulation as complex matrix multiplications and detailing the process for deriving the corresponding matrix from the linear map. Lemma~\ref{lemma:fits_auxil} then ties these concepts directly to the FITS architecture, demonstrating how the linear map type discussed is integral to FITS. This establishes a comprehensive characterization of matrices expressible via FITS. Following these lemmas, we will prove that for $L\geq T-2$, this characterization includes all affine transformations $\vec{x}\mapsto A\vec{x}+\vec{b}$.
\end{mdframed}


In order to prove Theorem~\ref{theorem:fits} we must introduce the following set of complex matrices.
\begin{lemma}\label{lemma:matrix_charac}
    Let $\mathcal{A}$ denote the set of linear maps $T:DFT(\mathbb{R^L})\rightarrow iDFT(\mathbb{R}^{L+T})$ which can be expressed as a composition $T = \Pi_{L+T}^{-1}\circ W\circ \Pi_L$ where $W$ is some $\left(\frac{L+T}{2}+1\right)$ by $\left(\frac{L}{2}+1\right)$ complex matrix. We claim that each $T$ can be expressed as a complex matrix multiplication $T(W):DFT(\mathbb{R}^L)\rightarrow iDFT(\mathbb{R}^{L+T})$ where $T(W)$ is derived from $W$ as follows: 
    \begin{align}\label{eqn:matrix_conditions}
    (T(W))_{ij} = \left\{\begin{array}{lr}
        Re(W_{ij}), & i\in\{0,\frac{L+T}{2}\},  j= 0,\frac{L}{2}\\
        \frac{1}{2}(W_{ij}), & i\in\{0,\frac{L+T}{2}\},   0<j< L/2 \\
                \frac{1}{2}(W^{\star}_{i,L-j}), & i\in\{0,\frac{L+T}{2}\},   j> L/2 \\
        W_{ij}, & 0<i<\frac{L+T}{2}, j\leq\frac{L}{2} \\
        0, & 0<i<\frac{L+T}{2}, j>\frac{L}{2} \\
        W^{\star}_{L+T-i,j}, & i\notin \{0,\frac{L+T}{2} \}, j=0 \\
        W^{\star}_{L+T-i,L-j}, & \textit{ otherwise}
        \end{array}\right\} 
\end{align}

For example let $T=2$ and $L=4$ and let $W$ be an arbitrary complex $4\times 2$ matrix. Then one has:
\begin{align}\label{example:piBpi}
    T(W) = (\Pi_{L+T}^{-1}(W\Pi_L) ) =  \begin{bmatrix}
        Re({W}_{00}) & \frac{W_{01}}{2} & Re(W_{02}) & \frac{W^{\star}_{01}}{2} \\
        W_{10} & W_{11} & W_{12} & 0 \\
        W_{20} & W_{21} & W_{22} & 0 \\
        Re(W_{30}) & \frac{W_{31}}{2} & Re(W_{32}) & \frac{W^{\star}_{31}}{2} \\
        W^{\star}_{20} & 0 & W^{\star}_{22} & W^{\star}_{21} \\
        W^{\star}_{10} & 0 & W^{\star}_{12} & W^{\star}_{11} 
    \end{bmatrix}
\end{align}
\end{lemma}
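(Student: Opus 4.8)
\textbf{Proof proposal for Lemma~\ref{lemma:matrix_charac}.}

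The plan is to compute the composition $\Pi_{L+T}^{-1}\circ W\circ \Pi_L$ explicitly and read off the entries of the resulting matrix $T(W)$, then verify the claimed formula case by case according to the row index $i$ and column index $j$. The key observation is that $\Pi_L$ and $\Pi_{L+T}^{-1}$ are themselves linear maps whose matrix representations are essentially sparse ``gluing'' operators: $\Pi_L$ restricts a length-$L$ vector to its first $\frac{L}{2}+1$ coordinates, and $\Pi_{L+T}^{-1}$ takes a length-$(\frac{L+T}{2}+1)$ vector and produces a length-$(L+T)$ vector by copying the low-frequency entries, taking real parts of the endpoints ($0$ and $\frac{L+T}{2}$), and conjugate-reflecting the remaining entries into the high-frequency positions. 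Because $T(W)$ is defined to act on $DFT(\mathbb{R}^L)$, which is a $\left(\frac{L}{2}+1\right)$-real-dimensional subspace embedded in $\mathbb{C}^L$, I must track carefully how the input's conjugate-symmetric redundancy interacts with $W$.

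First I would fix an input $\vec{z}\in DFT(\mathbb{R}^L)\subseteq\mathbb{C}^L$, which by conjugate symmetry satisfies $z_{L-j}=z_j^{\star}$ for $0<j<\frac{L}{2}$, with $z_0,z_{L/2}$ real. Applying $\Pi_L$ gives the truncation $(z_0,\dots,z_{L/2})$, so that $\Pi_L$ discards exactly the redundant conjugate entries. Composing with $W$ produces a vector $\vec{u}=W\Pi_L(\vec{z})$ of length $\frac{L+T}{2}+1$. Finally $\Pi_{L+T}^{-1}$ expands $\vec{u}$ into length $L+T$: the interior rows $0<i<\frac{L+T}{2}$ receive $u_i$ directly, the endpoint rows $i\in\{0,\frac{L+T}{2}\}$ receive $Re(u_i)$, and the high-frequency rows $i\notin\{0,\frac{L+T}{2}\}$ (beyond $\frac{L+T}{2}$) receive $u_{L+T-i}^{\star}$. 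The strategy is then to express each component $u_i=\sum_{k=0}^{L/2}W_{ik}z_k$ in terms of the original $\vec{z}$ (length $L$) rather than the truncated one, which is what forces the column-index casework: for $0<j<\frac{L}{2}$ one writes $z_j=\frac{1}{2}(z_j)+\frac{1}{2}(z_{L-j})^{\star}$ and distributes the coefficient, while for $j>\frac{L}{2}$ the contribution comes from the conjugate-reflection identity, producing the $\frac{1}{2}W_{ij}$, $\frac{1}{2}W^{\star}_{i,L-j}$, and $W^{\star}_{L+T-i,L-j}$ entries. The real-part operation at the endpoint rows explains the $Re(W_{ij})$ entries in rows $i\in\{0,\frac{L+T}{2}\}$, and the zeros in the block $0<i<\frac{L+T}{2},\, j>\frac{L}{2}$ arise because $W$ only has $\frac{L}{2}+1$ columns, so there is genuinely no coefficient multiplying the high-frequency input coordinates for interior output rows.

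I would organise the verification by partitioning the $(i,j)$ index grid into the seven regions appearing in \eqref{eqn:matrix_conditions} and checking each against the explicit action just described; the small worked case in \eqref{example:piBpi} with $T=2,L=4$ serves as a concrete sanity check that the endpoint real-parts, the halving in the low-frequency off-diagonal block, and the conjugate-reflected lower-right block all land in the right positions. The main obstacle I anticipate is purely bookkeeping rather than conceptual: correctly distributing the single coefficient $W_{ij}$ across the two input coordinates $z_j$ and $z_{L-j}$ that together carry the information of the $j$-th real frequency, and ensuring the endpoint indices $0$ and $\frac{L}{2}$ (which are their own conjugate partners) are handled without double-counting. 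One must be attentive that $T(W)$ is built to act on the redundant length-$L$ representation, so the factor of $\frac{1}{2}$ is exactly what makes $T(W)\vec{z}$ agree with $\Pi_{L+T}^{-1}(W\Pi_L(\vec{z}))$ on the constrained subspace $DFT(\mathbb{R}^L)$; getting these normalisation factors consistent is where an error is most likely to creep in.
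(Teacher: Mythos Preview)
Your proposal is correct and takes essentially the same route as the paper: both compute $\Pi_{L+T}^{-1}\circ W\circ\Pi_L$ by explicit case analysis on the index pair $(i,j)$, using the conjugate symmetry $z_{L-j}=z_j^{\star}$ of inputs in $DFT(\mathbb{R}^L)$; the paper merely organises the computation in two stages (first realising $W\Pi_L$ as $W$ padded with zero columns, then applying $\Pi_{L+T}^{-1}$ row-wise in three cases for $i$) rather than tracing a single input through the whole composition. One small correction on the point you yourself flagged as the likely pitfall: the $\tfrac{1}{2}$ factors in rows $i\in\{0,\tfrac{L+T}{2}\}$ do not arise from uniformly splitting $z_j=\tfrac{1}{2}z_j+\tfrac{1}{2}z_{L-j}^{\star}$ (that mechanism would also halve the interior rows, which it must not), but from the real-part operation $Re(u_i)=\tfrac{1}{2}(u_i+u_i^{\star})$, after which the conjugated term is re-indexed via $z_j^{\star}=z_{L-j}$ onto the columns $j>L/2$; the interior rows $0<i<\tfrac{L+T}{2}$ and the reflected rows $i>\tfrac{L+T}{2}$ carry no $\tfrac{1}{2}$ precisely because no real part is taken there.
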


\textbf{Remark}: In the statement of Lemma~\ref{lemma:matrix_charac}, $\Pi_{L+T}^{-1} \circ W \circ \Pi_L$ denotes the application of $\Pi_{L+T}^{-1}$ to each column of $W$ and applying $\Pi_{L}$ to each row. The order of these operations makes no difference since $\Pi_L$ is a projection.






\begin{proof}

Let $W$ be some arbitrary complex matrix of dimension $\left(\frac{L+T}{2}+1\right)$ by $\left(\frac{L}{2}+1\right)$. Let $T$ be the linear map defined on the domain $DFT(\mathbb{R^L})$ formed from the composition $\Pi_{L+T}^{-1} \circ W \circ \Pi_L$. We begin by assuming that there exists a complex $T+L$ by $L$ matrix $T(W)$ which is equivalent to this linear map and derive it's structure. At the end we then show that it is indeed equivalent to the linear map $T$. 

As a projection onto the first $1+\frac{L}{2}$ components, $\Pi_{L}$ can be written as an $(\frac{L}{2}+1) \times L$ matrix where the $ii$\textsuperscript{th} entry of $\Pi_{L}$ is  $1$ and all other entries are zero. Right composing $W$ by $\Pi_{L}$ yields a single matrix equivalent to appending $\frac{L}{2}-1$ columns of zeros to the right of $W$. That is:
\begin{align}\label{eqn:pi}
    (\Pi W)_{ij}= \left\{\begin{array}{lr}
        W_{ij}, & \text{for } j\leq \frac{L}{2}+1\\
        0, & \text{otherwise }
        \end{array}\right\}
\end{align}
For example, in the case $L=4, T=2$;

\begin{align*}
    W\circ \Pi_L &= \begin{bmatrix}
        W_{11} & W_{12} & W_{13} \\
        W_{21} & W_{22} & W_{23} \\
        W_{31} & W_{32} & W_{33} \\
        W_{41} & W_{42} & W_{43} 
    \end{bmatrix}
    \begin{bmatrix}
        1 & 0 & 0 & 0  \\
        0 & 1 & 0 & 0\\
        0 & 0 & 1 & 0
    \end{bmatrix} \\
    &= \begin{bmatrix}
        W_{11} & W_{12} & W_{13} & 0 \\
        W_{21} & W_{22} & W_{23} & 0 \\
        W_{31} & W_{32} & W_{33} & 0 \\
        W_{41} & W_{42} & W_{43} & 0 
    \end{bmatrix}
\end{align*}

Now, let $B = (W\circ \Pi_L)$ be some complex $(\frac{T+L}{2}\times L)$ matrix. One may similarly write $\Pi_{L+T}^{-1}\circ  B$, as a single matrix $D \in\mathbb{C}^{((T+L)\times L)}$. Using the definition of $\Pi_{T+L}^{-1}$ (Definition~\ref{def:pi}) we can derive the entries of the matrix $D$. We do this by noting that the matrix $D$ must satisfy $D\vec{Y}=\Pi_{L+T}^{-1}\circ  B$ for any $\vec{Y}\in  DFT(\mathbb{R}^L)$. By equating the components $(D\vec{Y})_i=(\Pi_{L+T}^{-1}\circ  B)_i$, one may deduce the matrix $D$ in terms of $B$.

\textbf{Case 1:} Let $i=0,\frac{L+T}{2}$. 

$(D\vec{Y})_i = (\Pi^{-1}_{L+T} B\vec{Y})_i := Re((B\vec{Y})_i) = \frac{(B\vec{Y})_i + (B\vec{Y})^{\star}_i}{2}$. Therefore,
\begin{align*}
    (D\vec{Y})_i = \sum_{j=0}^{L-1} D_{ij}Y_j  =  \frac{1}{2}\bigg(\sum_{j=0}^{L-1} B_{ij}Y_j + B^{\star}_{ij}Y^{\star}_j\bigg) \\
    = \bigg(\frac{B_{i0}+B_{i0}^{\star}}{2}\bigg)Y_0 + \sum_{j=1}^{L-1} \bigg(\frac{B_{ij} + B^{\star}_{i,L-j}}{2}\bigg)Y_j 
\end{align*}

As this holds for all $\vec{Y}\in DFT(\mathbb{R}^L)$, we may conclude; $D_{i0} = Re(B_{i0}), D_{i,L/2} = Re(B_{i,L/2})$ and otherwise; $D_{ij} = \bigg(\frac{B_{ij} + B^{\star}_{i,L-j}}{2}\bigg)$. 

\textbf{Case 2:} Let $0<i<\frac{L+T}{2}$.

By the definition of $\Pi^{-1}_{L+T}$ we have $(D\vec{Y})_i=(\Pi_{L+T}^{-1}  B\vec{Y})_i = (B\vec{Y})_i$. Since this holds for all $\vec{Y}$ we must have $D_{ij} = B_{ij}$ for all $j$.

\textbf{Case 3:} Let $i > \frac{L+T}{2}$.

Using Def.~\ref{def:pi} we may derive the following which holds for all $\vec{Y}\in DFT(\mathbb{R}^L)$:
\begin{align*}
   (D \vec{Y})_{i} = (\Pi^{-1}_{L+T} B\vec{Y})_i = (B\vec{Y})^{\star}_{L+T-i}  \\
    \implies (\sum_{j=0}^{L-1}D_{ij}Y_j) = \sum_{j=0}^{L-1}B^{\star}_{L+T-i,j}Y^{\star}_j\\ =  (\sum_{j=1}^{L-1}B^{\star}_{L+T-i,j}Y_{L-j}) + B^{\star}_{L+T-i,0}Y_0
\end{align*}
Since $\vec{Y}\in DFT(\mathbb{R}^L)$ we know that $Y_0, Y_{L/2}\in\mathbb{R}$ and otherwise $Y_j = Y^{\star}_{L-j}$. It follows therefore that $D_{i0} = B_{L+T-i,0}^{\star}$ and $D_{ij} = B_{L+T-i,L-j}^{\star}$ for $j>0$.

Below we summarise our findings, writing a general expression for the $ij$\textsuperscript{th} component of $D=\Pi^{-1}_{L+T} B$

\begin{align}\label{eqn:pi-inverseB}
    (\Pi^{-1}_{L+T}B)_{ij}= \left\{\begin{array}{lr}
        Re(B_{ij}), & i\in\{0,\frac{L+T}{2}\},  j= 0\\
        \frac{1}{2}(B_{ij}+B_{i,L-j}^{\star}), & i\in\{0,\frac{L+T}{2}\},   j\neq 0 \\
        B_{ij}, & 0<i<\frac{L+T}{2} \\
        B^{\star}_{L+T-i,j}, & i\notin \{0,\frac{L+T}{2} \}, j=0 \\
        B^{\star}_{L+T-i,L-j}, & \textit{ otherwise}
        \end{array}\right\} 
\end{align}

We may combine Equation~\ref{eqn:pi-inverseB} with the earlier Equation~\ref{eqn:pi} to establish a general form for the $ij$\textsuperscript{th} element of $\Pi^{-1}_{L+T}W\Pi_L$:

\begin{align}\label{eqn:matrix_conditions_derived}
   T(W)_{ij} := (\Pi^{-1}_{L+T}W\Pi_L)_{ij} = \left\{\begin{array}{lr}
        Re(W_{ij}), & i\in\{0,\frac{L+T}{2}\},  j= 0,\frac{L}{2}\\
        \frac{1}{2}(W_{ij}), & i\in\{0,\frac{L+T}{2}\},   0<j< L/2 \\
                \frac{1}{2}(W^{\star}_{i,L-j}), & i\in\{0,\frac{L+T}{2}\},   j> L/2 \\
        W_{ij}, & 0<i<\frac{L+T}{2}, j\leq\frac{L}{2} \\
        0, & 0<i<\frac{L+T}{2}, j>\frac{L}{2} \\
        W^{\star}_{L+T-i,j}, & i\notin \{0,\frac{L+T}{2} \}, j=0 \\
        W^{\star}_{L+T-i,L-j}, & \textit{ otherwise}
        \end{array}\right\} 
\end{align}

This is precisely the characterisation given in Equation~\ref{eqn:matrix_conditions_derived}. 

\textbf{Existence Proof:} We have demonstrated the necessary structure for a complex matrix $T(W)$ that is equivalent to the linear map $\Pi_{L+T}^{-1} \circ W \circ \Pi_L$. The task now is to prove that such a complex matrix representation, $T(W)$, indeed exists. 

The map $\Pi_{L+T}^{-1} \circ W \circ \Pi_L$, being real-linear, can be represented by a real matrix $M$ when considering its domain, $DFT(\mathbb{R}^L)$, as a real vector space of dimension $2L$. The domain $DFT(\mathbb{R}^L)$ consists of complex vectors $(z_0, z_1, \ldots, z_{L/2}, \ldots, z_{L-1})$, with $z_0$ and $z_{L/2}$ real, and $z_i = z^*_{L-i}$ for other indices, indicating complex conjugate pairs.

To transition from a real to a complex matrix representation, we exploit the structure of these complex vectors by expressing the real and imaginary parts of $z_i$ as $\text{Re}(z_i) = \frac{z_i + z^*_i}{2}$ and $\text{Im}(z_i) = \frac{z_i - z^*_i}{2}$, respectively. This approach allows for the real matrix $M$, defined in terms of the real and imaginary components, to be reformulated as a complex matrix, thus confirming the existence and formulation of $T(W)$ as a complex matrix multiplication.
\end{proof}


\begin{lemma}\label{lemma:fits_auxil}
Any FITS model can be expressed in the form $\vec{x}\mapsto A\vec{x} + \vec{b}$ where $ A \in D_{L+T}^{-1}\circ \mathcal{A}  \circ D_{L}$ and $\vec{b} \in \mathbb{R}^{L+T}$. Here $\mathcal{A}$ denotes the set of matrices introduced in Lemma~\ref{lemma:matrix_charac}. Conversely, if $\vec{x}\mapsto A \vec{x} + \vec{b} $ is affine linear map such that $A \in D_{L+T}^{-1}\circ \mathcal{A} \circ D_{L}$, then there exists a functionally equivalent FITS model.
\end{lemma}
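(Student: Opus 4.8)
The plan is to obtain both inclusions of this characterisation as essentially immediate consequences of Lemma~\ref{lemma:prescription} together with the definition of $\mathcal{A}$ from Lemma~\ref{lemma:matrix_charac}; the only genuine work lies in handling the bias term and in checking that the relevant domains and codomains line up. Since the heavy lifting (the explicit entrywise description of the maps in $\mathcal{A}$) was already done in Lemma~\ref{lemma:matrix_charac}, I expect this lemma to be mostly a matter of careful bookkeeping rather than new ideas.

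For the forward inclusion I would start from Lemma~\ref{lemma:prescription}, which tells us that any FITS model $f$ with complex weight matrix $W$ and complex bias $\vec{c}$ can be written as $f(\vec{x}) = A\vec{x} + \vec{b}$ with $A = D_{L+T}^{-1}\circ\Pi_{L+T}^{-1}\circ W\circ\Pi_L\circ D_L$ and $\vec{b} = iRFT(\vec{c})$. I would then simply regroup the factors as $A = D_{L+T}^{-1}\circ(\Pi_{L+T}^{-1}\circ W\circ\Pi_L)\circ D_L$ and observe that the middle factor is, by the very definition of $\mathcal{A}$ in Lemma~\ref{lemma:matrix_charac}, an element of $\mathcal{A}$. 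Hence $A\in D_{L+T}^{-1}\circ\mathcal{A}\circ D_L$. Since $\vec{b} = iRFT(\vec{c}) = D_{L+T}^{-1}\circ\Pi_{L+T}^{-1}(\vec{c})$ and the image of $\Pi_{L+T}^{-1}$ is conjugate-symmetric, the inverse DFT returns a real vector, so $\vec{b}\in\mathbb{R}^{L+T}$, completing this direction.

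For the converse I would reverse this construction. Suppose $A\in D_{L+T}^{-1}\circ\mathcal{A}\circ D_L$, so there is some $T\in\mathcal{A}$ with $A = D_{L+T}^{-1}\circ T\circ D_L$; by the definition of $\mathcal{A}$ I may fix a complex matrix $W$ of shape $(\frac{L+T}{2}+1)\times(\frac{L}{2}+1)$ with $T = \Pi_{L+T}^{-1}\circ W\circ\Pi_L$. This $W$ is precisely an admissible weight for the FITS complex linear layer, whose input and output dimensions are $\floor{L/2}+1$ and $\floor{(L+T)/2}+1$. For the bias I would exploit the invertibility of the RFT on real signals: setting the complex bias to $\vec{c} := RFT(\vec{b})$, which has the required $\floor{(L+T)/2}+1$ components, yields $iRFT(\vec{c}) = \vec{b}$ by Remark~\ref{remark:pi-inverse-property}. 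Feeding $W$ and $\vec{c}$ back through Lemma~\ref{lemma:prescription} then produces a FITS model whose induced affine map is exactly $\vec{x}\mapsto A\vec{x}+\vec{b}$.

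The step most in need of care, and the only place an error could creep in, is the consistency of the domains and codomains rather than any substantive difficulty. I must verify that $D_L$ maps $\mathbb{R}^L$ onto $DFT(\mathbb{R}^L)$, the space on which maps in $\mathcal{A}$ are defined, and that $D_{L+T}^{-1}$ returns a real vector when applied to the conjugate-symmetric output of a map in $\mathcal{A}$, so that the composite $A$ is a genuine real linear map $\mathbb{R}^L\to\mathbb{R}^{L+T}$. I must likewise confirm that the chosen $\vec{c} = RFT(\vec{b})$ has exactly the dimension of a FITS complex bias so that it is admissible. These are routine consistency checks, and once they are in place both inclusions follow directly from the two preceding lemmas.
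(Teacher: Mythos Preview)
Your proposal is correct and follows essentially the same route as the paper: invoke Lemma~\ref{lemma:prescription} to write $A = D_{L+T}^{-1}\circ(\Pi_{L+T}^{-1}\circ W\circ\Pi_L)\circ D_L$ and $\vec b = iRFT(\vec c)$, identify the middle factor as an element of $\mathcal{A}$, and for the converse pick $W$ realising the chosen element of $\mathcal{A}$ and set $\vec c := RFT(\vec b)$, using Remark~\ref{remark:pi-inverse-property} for $iRFT(RFT(\vec b))=\vec b$. Your additional remarks on domain/codomain consistency and the reality of $\vec b$ are sound and, if anything, slightly more careful than the paper's own write-up.
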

\begin{proof}
We reiterate, that as a sequence of affine linear operations, FITS is a real affine linear model $\mathbb{R}^{L}\rightarrow \mathbb{R}^{L+T}$. It follows that any FITS model can be expressed in the form $A\vec{x}+\vec{b}$ for some choice of $A\in \mathbb{R}^{(L+T)\times L}$ and $\vec{b}\in \mathbb{R}^{L}$. It remains to show that for any FITs model, $A$ can be selected from the family $D_{L+T}^{-1}\circ \mathcal{A}  \circ D_{L}$. 

We showed in Lemma~\ref{lemma:prescription} that, if $W,\vec{c}$ are the weights matrix and bias of the complex linear layer in the FITS model then $\vec{b} = iRFT(\vec{c})$ and $A = D_{L+T}^{-1}\circ\Pi_{L+T}^{-1} W\circ \Pi_L\circ D_L$.

Putting this together, we have
\begin{align*}
    FITS(\vec{x}; W, \vec{c})  &= D_{L+T}^{-1}(\Pi_{L+T}^{-1}W \Pi_{L})D_{L}\vec{x} + iRFT(\vec{c})
\end{align*}
which can be compactly expressed as 
\[
FITS(\vec{x}; W, \vec{c}) = D_{L+T}^{-1} B D_{L}
\]
where \( B = \Pi_{L+T}^{-1}W \Pi_{L} :DFT(\mathbb{R}^L)\rightarrow iDFT(\mathbb{R}^{L+T})\) belongs to \( \mathcal{A} \) as desired.

Since $W$ can be any complex matrix then the converse also holds in that any linear map $\vec{x}\mapsto A\vec{x}$ where $A\in D_{L+T}^{-1} \mathcal{A} D_{L}$ must be equivalent to a FITS model. 

It remains to show that any bias $\vec{b}\in\mathbb{R}^{L+T}$ can be expressed in the form $iRFT(\vec{c})$ where $\vec{c}\in\mathbb{C}^{\frac{L+T}{2}+1}$. This follows from the bijectivity of the DFT. Specifically, we can obtain any $\vec{b}$ by letting $\vec{c}:= RFT(\vec{b})$. Then $iRFT(RFT(\vec{b})) = iDFT\circ \Pi_{L+T}^{-1}\circ \Pi_{L+T}\circ DFT (\vec{b}) = \vec{b}$ by Remark~\ref{remark:pi-inverse-property}.

\end{proof}

Having proved Lemma~\ref{lemma:fits_auxil} and the more technical Lemma~\ref{lemma:matrix_charac} we are now ready to prove Theorem~\ref{theorem:fits}. 

\textbf{Proof of Lemma~\ref{theorem:fits}} 
\begin{proof}
We showed in Lemma~\ref{lemma:fits_auxil} that every FITS model $\vec{x}\mapsto FITS(\vec{x};W,\vec{c})$ can be written in the form $x\mapsto A\vec{x}+\vec{b}$ where $A\in \mathbb{R}^{(L+T)\times L}$, $\vec{b}\in \mathbb{R}^{L+T}$. Moreover we showed how one may obtain $A, \vec{b}$ from $W, \vec{c}$ via $A=D_{L+T}^{-1}\Pi^{-1}_{L+T}W\Pi_L D_L$ and $\vec{b}=iRFT\vec{c}$. FITS outputs both a forecast and a reconstruction of the context. Consequently we may decompose $A = \begin{bmatrix}
    A_L \\ A_T
\end{bmatrix}$ where $A_T\in \mathbb{R}^{T\times L}$ is the matrix which produces a forecast from the context vector. We have already seen in Lemma~\ref{lemma:fits_auxil}  that FITS imposes no restriction on our bias term $\vec{b}$. Our claim is that additionally, when $L\geq T-2$, any real ${T\times L}$ matrix $A_T$ by be attained an appropriate selection of $W$. If we define the operator $P:\mathbb{R}^{(T+L)\times L} \rightarrow \mathbb{R}^{T\times L}$ by $P\bigg(\begin{bmatrix}
    A_L \\ A_T
\end{bmatrix}\bigg) = A_T$ then we can formulate this claim as
\begin{align*}
   L\geq T-2 \implies \mathbb{R}^{T\times L} \subseteq P\circ D_{L+T}^{-1}\circ\mathcal{A}\circ D_L
\end{align*}
Since $D_L$ is bijective this may be equivalently be written as 
\begin{align*}
    \mathbb{R}^{T\times L}\circ D_L^{-1} \subseteq P\circ D_{L+T}^{-1}\circ\mathcal{A}
\end{align*}
Note that we already have the reverse inclusion $P\circ D_{L+T}^{-1}\circ\mathcal{A}\circ D_L \subseteq \mathbb{R}^{T\times L}$ by Lemma~\ref{lemma:fits_auxil}. 

We begin by characterising the space of matrices $\mathbb{R}^{T\times L}\circ D_L^{-1}$. This is the space of matrices one gets when you apply an inverse DFT to the rows of all real $T\times L$ matrices. That is, $\mathbb{R}^{T\times L}\circ D_L^{-1}$ is the subset of complex $T\times L$ matrices where each row is in the set $D_L^{-1}(\mathbb{R}^L)$. These are precisely the complex vectors of length $L$ where $v_0, v_{L/2} \in \mathbb{R}$ and where otherwise $v_{i} = v^{\star}_{L-i}$. For example, for $T=6, L=4$ the general form of $\mathbb{R}^{T\times L}\circ D_L^{-1}$ can be written as follows where lowercase denotes a real entry. 
\begin{align}
    \begin{bmatrix}
        b_{00} & B_{01} & b_{02} & B^{\star}_{01} \\
        b_{01} & B_{11} & b_{12} & B^{\star}_{11} \\
        b_{02} & B_{21} & b_{22} & B^{\star}_{21} \\
        b_{03} & B_{31} & b_{32} & B^{\star}_{31} \\
        b_{04} & B_{41} & b_{42} & B^{\star}_{41} \\
       b_{05} & B_{51} & b_{52} & B^{\star}_{51} 
    \end{bmatrix}
\end{align}

Using this fact, $\mathbb{R}^{T\times L}\circ D_L^{-1}$ may be alternatively be characterised as the set of complex $T\times L$ matrices where the zeroth and $L/2$\textsuperscript{th} columns, $c_0, c_{L/2}$, are arbitrary real vectors and where otherwise all other columns are arbitrary complex vectors subject to the condition $c_{i} = c_{L-i}^{\star}$. Written as a vector space isomorphism this is:
\begin{align*}
    \mathbb{R}^{T\times L}\circ D_L^{-1}\cong \mathbb{R}^T \oplus \underbrace{\mathbb{C}^T \oplus \ldots \oplus \mathbb{C}^T}_{(\frac{L}{2} - 1) \text{ times}} \oplus \mathbb{R}^T \oplus\underbrace{\mathbb{C}^T \ldots \oplus \mathbb{C}^T}_{(\frac{L}{2} - 1) \text{ times}}
\end{align*}

Using Lemma~\ref{lemma:matrix_charac} one may characterise $\mathcal{A}$ similarly in terms of its columns. The zeroth and $L/2$\textsuperscript{th} columns are arbitrary vectors in $D_{L+T}(\mathbb{R}^{L+T})$. For $0<i<L/2$ the $i$\textsuperscript{th} column, $c_i$, is an arbitrary complex vector of length $T+L$, subject to the condition $c_{ij}=0$ for $j> (T+L)/2$. For $i>L/2$ column $c_i$ satisfies the condition $c_{i0} = c_{L-i,0}$ and otherwise $c_{ij} = c^{\star}_{L-i,L+T-j}$. In the case $T=2, L=4$ the general form for a matrix in $\mathcal{A}$ can be written as follows where lowercase once again denotes a real entry:
\begin{align*}
       \begin{bmatrix}
        a_{00} & A_{01} & A_{02} & a_{03} & A^{\star}_{02} & A^{\star}_{01}  \\
        A_{10} & A_{11} & A_{12} & A_{13}      & 0              &  0 \\
         A_{20} & A_{21} & A_{22} & A_{23}     &  0              & 0 \\
          a_{30} & A_{31} & A_{32} & a_{33} & A^{\star}_{32} & A^{\star}_{31} \\
           A^{\star}_{20} & 0 & 0 & A^{\star}_{23}     & A^{\star}_{22}        & A^{\star}_{21} \\
           A^{\star}_{20} & 0 & 0 & A^{\star}_{13}     &  A^{\star}_{12}        & A^{\star}_{11} \\
    \end{bmatrix} 
\end{align*}

We will use the notation $\mathcal{S}$ to represent the space of complex vectors $\vec{v}\in \mathbb{C}^{(L+T)}$ where $v_i=0$ for $i>(T+L)/2$.
\begin{align*}
    \mathcal{S} := \{\vec{v}\in \mathbb{C}^{(L+T)} | v_i=0, i>(T+L)/2 \}
\end{align*}
Using this, one may write $\mathcal{A}$ as a vector isomorphism.
\begin{align*}
    \mathcal{A} \cong D_{L+T}(\mathbb{R}^{L+T}) \oplus \underbrace{\mathcal{S} \oplus \ldots \oplus \mathcal{S}}_{(\frac{L}{2} - 1) \text{ times}} \oplus D_{L+T}(\mathbb{R}^{L+T}) \oplus\underbrace{\mathcal{S}\oplus  \ldots \oplus \mathcal{S}}_{(\frac{L}{2} - 1) \text{ times}}
\end{align*}

Remark: Note that in both cases $\mathcal{A}$ and $\mathbb{R}^{T\times L}\circ D_L^{-1}$ are completely specified by their first $(L/2)$+1 columns since the for $i>L/2$ column $c_i$ can be determined completely by $c_{L-i}$

 If $A_T$ is some arbitrary matrix in $\mathbb{R}^{T\times L}\circ D_L^{-1}$ we want to show that, when $L\geq T-2$, we can find $W\in \mathcal{A}$ where $P\circ D_{L+T}^{-1}(W) = A$. 

We observe that the linear map $P\circ D_{L+T}^{-1}:\mathbb{C}^{L+T}\rightarrow \mathbb{C}^T$ operates independently on each column of $\mathcal{A}$. Thus, using the decompositions given above for $\mathcal{A}$ and $D_{L+T}(\mathbb{R}^{L+T})$, we only need to show that: 
  \begin{align*}
     \mathbb{R}^T \subseteq P\circ D_{L+T}^{-1}( D_{L+T}(\mathbb{R}^{L+T}))
\end{align*}
 and
\begin{align*}
      \mathbb{C}^T \subseteq P\circ D_{L+T}^{-1}(\mathcal{S})
  \end{align*}

The former of these is trivial since $D_{L+T}$ is a bijection; hence $P\circ D_{L+T}^{-1}(D_{L+T}(\mathbb{R}^{L+T})) = P(\mathbb{R}^{L+T}) = \mathbb{R}^T$. To show the second inclusion, we recollect that we already have $P\circ D_{L+T}^{-1}(\mathcal{S}) \subseteq \mathbb{C}^T$. Hence, we need only to show that the dimension of the space $P\circ D_{L+T}^{-1}(\mathcal{S})$ is greater than $T$ (the dimension of $\mathbb{C}^T$).

We claim that $\dim(P\circ D_{L+T}^{-1}(\mathcal{S})) = \min(T, \frac{T+L}{2}+1)$. Hence, we have $\dim(P\circ D_{L+T}^{-1}(\mathcal{S})) \geq T \iff \frac{T+L}{2}+1 \geq T \iff L \geq T-2$ as required.

In order to demonstrate this claim, note that $P\circ D^{-1}{L+T}$ can be written as a $T \times (L+T)$ matrix formed by taking the bottom $T$ rows of the matrix $D{L+T}^{-1}$. Then, due to the structure of $\mathcal{S}$ (namely, that $v_i=0$ for all $i > \frac{T+L}{2}$), $\dim(P\circ D^{-1}{L+T}(\mathcal{S}))$ is equal to the rank of the $T \times \left(\frac{T+L}{2}+1\right)$ submatrix extracted from the bottom left of the $(L+T) \times (L+T)$ matrix $D{L+T}^{-1}$. If we can show that this submatrix has full row and column rank, then we are done. Let $a := \min\left(\frac{T+L}{2}+1, T\right)$ and form the squared $a \times a$ matrix by discarding the excess rows or columns. We claim this square matrix has rank $a$. This follows from the fact that this submatrix is a Vandermonde matrix generated from a root of unity, thus it has a non-zero Vandermonde determinant and is therefore full rank.

\end{proof}




\section{Further Results and Experiments}
 \begin{figure}[!htbp]
	\centering
	\includegraphics[width=0.5\linewidth]{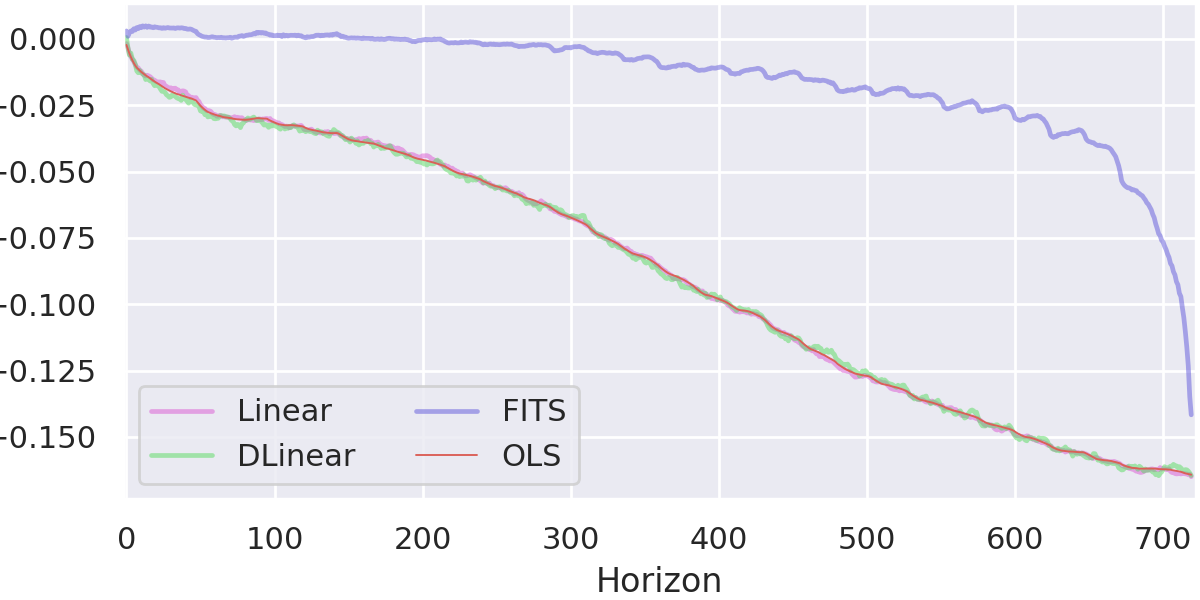}
	\caption{The biases learned by the FITS, Linear, DLinear after being trained on ETTh1 for 50 epochs. We also include the bias learned by the closed-form OLS linear regression. We note that, in line with theory from Section~\ref{sec:model_analysis}, we get the same bias for the DLinear, OLS and Linear models. Notably the bias for FITS is substantially different. This is explained by the choice of normalisation used in the Fourier transform in FITS. \label{fig:noRevIN_biases}
 }
\end{figure}

\section{FITS Bias Term - Detailed Breakdown}\label{sec:fits-bias}
In this section we explain and breakdown Section~\ref{sec:fitsBias-main} explaining how FITS operates as an almost bias-free model early in training.

In Def~\ref{def:dft} we defined the DFT and its inverse. The definition we use is standard and in line the implementation used in FITS \footnote{\url{https://github.com/VEWOXIC/FITS/}}. An alternative definition of the DFT instead defines it as follows:
\begin{align}
    DFT_L(\vec{x})_j := \frac{1}{\sqrt{N}}\sum_{k=0}^{L-1} e^{\frac{-2\pi i kj}{L}}x_k
\end{align}
The inverse DFT is then defined as $\frac{1}{\sqrt{N}}D^{*}_L$ where $D_L$ is as defined in Eqn.~\ref{eqn:DL}.

This second definition is identical, differing only in the choice of normalisation.  This alternative definition is referred to as the \textit{Normalised } or \textit{Orthonormallly Normalised} DFT. A key property of the normalised DFT is that it is distance-preserving. By this we mean that $\vert\vert DFT(\vec{x})\vert\vert_2 = \vert\vert \vec{x}\vert\vert_2 $. On the contrary, the DFT as it is defined in Def.~\ref{def:dft} satisfies $\vert\vert DFT(\vec{x})\vert\vert_2 = \sqrt{N}\vert\vert \vec{x}\vert\vert_2$ where $N$ is the number of components of the vector $\vec{x}$. In other words, the DFT as defined in Section~\ref{sec:fits} stretches distances by a factor of $\sqrt{N}$. The opposite of this is true for the inverse DFT so that $\vert\vert iDFT(\vec{x})\vert\vert_2 = \frac{1}{\sqrt{N}}\vert\vert \vec{x}\vert\vert_2$.

We saw in Lemma~\ref{lemma:prescription} that any FITS model may be expressed in the form $A\vec{x}+\vec{b}$. Moreover if $\vec{c}$ denotes the bias in FITS's complex linear layer then we may obtain $\vec{b}$ via $\vec{b}=iRFT(\vec{c})$. Since the $iRFT$ is real-linear this means that $\vec{b}$ and $\vec{c}$ are related via a matrix equation $\vec{b} = M\vec{C}$ where $\vec{C}$ is the real vector obtained by splitting $\vec{c}$ into its real and imaginary components $\vec{C}:=\begin{bmatrix}
    \vec{Re(c)}\\ \vec{Im(c)}
\end{bmatrix}$

 Let us now consider what the ramifications are of learning $\vec{b}$ by stochastic gradient descent (SGD) using the parameterisation $\vec{b} = M\vec{C}$ rather than learning $\vec{b}$ directly. If $\Bar{v}$ denotes the derivative of the loss with respect to the variable $\vec{v}$: that is $\bar{v}:=\frac{\partial L}{\partial \vec{v}}$ and $\eta$ denotes our learning rate then the gradient update using a naive parameterisation of $\vec{b}$ is:
\begin{align*}
    \vec{b}\mapsto \vec{b} - \eta \bar{b}
\end{align*}

Conversely, if we let $\vec{b} = M\vec{C}$ and we instead learn $\Vec{C}$ by gradient descent. One may show by the chain rule that $\bar{C}: =  M^T\bar{b}$. Thus, using the same learning rate as before, this induces an update 
\begin{align*}
    \vec{C}\mapsto \vec{C} - \eta \bar{C} = \vec{C} - \eta M^T\bar{b}\\
    \implies \vec{b}\mapsto \vec{b} -  \eta MM^T\bar{b}\\
\end{align*}
Thus, this choice of parameterisation means that we get an update of $MM^T\bar{b}$ where naively we would have an update of $\bar{b}$.  

It should be immediately clear, that unless $MM^T$ is approximately distance preserving that we are effectively scaling the learning rate of our bias $\vec{b}$. As we have discussed, because we are using a non-orthonormal normalisation $M$ scales $\vec{c}$ in the order of $\frac{1}{\sqrt{L+T}}$. Put together, this means that $MM^T$ is scaling $\vec{c}$ in the order of $\frac{1}{L+T}$. FITS applies a scaling of $\frac{L+T}{L}$ before outputting the forecast which partially mitigates this. However in conclusion, $\vec{b}$ still has a learning rate approximately $\frac{1}{L}$ times smaller than one would obtain through a naive parameterisation of $\vec{b}$.

As we saw, any FITS model can be expressed in the form $A\vec{x}+\vec{b}$. It is natural to ask whether this phenomena also impacts the weight matrix $A$. In fact it does not. As a result the issue with the bias cannot simply be resolved by increasing the learning rate as this would result in a learning rate which is too high for learning $A$. Let briefly sketch the reason why the weight matrix doesn't also have these issues. Crudely speaking the weight matrix $W$ in FITS's complex linear layer and $A$ are related via an expression of the $A= M_1WM_2$ where $M_1, M_2$ are real matrices corresponding to the iRFT and RFT respectively. If one chooses to normalise the iRFT by $\frac{1}{N}$ this is then offset by the fact that the right multiplication $M_2$ is unnormalised. In terms of backpropagation rules we have:
\begin{align*}
    \bar{W} = M_1^T \bar{A} M_2^T
\end{align*}
Therefore, whereas under a naive parameterisation we would have an update of $\bar{A}$, FITS gives us an update of $M_1M_1^T \bar{A} M_2^TM_2$. Thus, whatever normalisation standard we use for the RFT; whether we normalise the RFT $(M_2)$ by $\frac{1}{N}$ but not the iRFT $(M_1)$ or whether we normalise them both equally, leads the same update. 

\section{Further Proofs}\label{App:futher_proofs}
\subsection{DLinear}
In Lemma~\ref{lemma:dlinear} we write the padded moving average, utilised in DLinear to obtain the trend of $\vec{x}$, as a matrix multiplication $D\vec{x}$. In this part we explain the structure of $D$. We do this by means of an example: Consider the simple case where we have a context vector $\vec{x}$ of length 6 and we take a moving average with a kernel size of 3. In order to preserve dimension of $\vec{x}$ on must pad either side of $\Vec{x}$. We do this by repeating the first and last values twice before applying the moving average. That is:
\begin{align*}
    (x_1, x_2, x_3, x_4, x_5, x_6) \mapsto (x_1,x_1, x_2, x_3, x_4, x_5, x_6, x_6)
\end{align*}
In general if we have a kernel size of $K$ where $K$ is odd then we must pad each side with $\frac{K-1}{2}$ repeated entries. (In DLinear they use a kernel size of 25 \cite{zeng2023transformers}).

This padding operation can be expressed in matrix form. For this example
\begin{align*}
    \begin{bmatrix}
        x_1\\x_1\\ x_2\\ x_3\\ x_4\\ x_5\\ x_6\\ x_6
    \end{bmatrix}
    =
    \begin{bmatrix}
        1 & 0 & 0 & 0 & 0 & 0 \\
        1 & 0 & 0 & 0 & 0 & 0 \\
        0 & 1 & 0 & 0 & 0 & 0 \\
        0 & 0 & 1 & 0 & 0 & 0 \\
        0 & 0 & 0 & 1 & 0 & 0 \\
        0 & 0 & 0 & 0 & 1 & 0 \\
        0 & 0 & 0 & 0 & 0 & 1 \\
        0 & 0 & 0 & 0 & 0 & 1 \\
    \end{bmatrix}
        \begin{bmatrix}
        x_1\\ x_2\\ x_3\\ x_4\\ x_5\\ x_6
    \end{bmatrix}
\end{align*}

The moving average of the expanded $\vec{x}$ is calculated by taking the arithmetic mean of each successive run of $K=3$ values. This may be written as a matrix multiplication:
\begin{align}
\frac{1}{3}
    \begin{bmatrix}
        1 & 1 & 1 & 0 & 0 & 0 & 0 & 0\\
        0 & 1 & 1 & 1 & 0 & 0 & 0 & 0\\
        0 & 0 & 1 & 1 & 1 & 0 & 0 & 0 \\
        0 & 0 & 0 & 1 & 1 & 1 & 0 & 0\\
        0 & 0 & 0 & 0 & 1 & 1 & 1 & 0\\
        0 & 0 & 0 & 0 & 0 & 1 & 1 & 1 \\
    \end{bmatrix}
    \begin{bmatrix}
        x_1\\x_1\\ x_2\\ x_3\\ x_4\\ x_5\\ x_6\\ x_6
    \end{bmatrix}
\end{align}
We can combine these two operations into a single matrix multiplication to obtain
\begin{align*}
D\vec{x} := 
\frac{1}{3}
        \begin{bmatrix}
        2 & 1 & 0 & 0 & 0 & 0 \\
        1 & 1 & 1 & 0 & 0 & 0 \\
        0 & 1 & 1 & 1 & 0 & 0 \\
        0 & 0 & 1 & 1 & 1 & 0 \\
        0 & 0 & 0 & 1 & 1 & 1 \\
        0 & 0 & 0 & 0 & 1 & 2 \\
    \end{bmatrix}
        \begin{bmatrix}
        x_1\\ x_2\\ x_3\\ x_4\\ x_5\\ x_6
    \end{bmatrix}
\end{align*}

\subsection{Closed Form Solution to Linear Regression}\label{sec:closed_form}
A well-known property of least-squares linear regression is that it admits a closed-form solution \cite{hastie2009elements}. There are a number of ways in which one may compute this solution numerically. Below we define one of the more common approaches:  

\begin{definition}[Closed-Form Solution]\label{def:closed:form}
Let \( X \) denote the \( N \times L \) design matrix containing our training data, and let \( Y \) denote the \( N \times T \) matrix of training targets. The \( L \times T \) weight matrix \( W \) that minimises the training loss \( \Vert XW - Y \Vert_2^2 \) is given in closed form as follows:
\begin{align}
    W^{\star} = (X^TX)^{-1}X^TY
\end{align}
If the rank of \( X \) is less than \( L \), indicating that \( X \) is rank-deficient, a unique solution may not exist. In such cases, a solution can be obtained using the Moore-Penrose pseudo-inverse, denoted as \( (X^TX)^{+} \), instead of the regular inverse.
\end{definition}

In practice the solution given in Def.~\ref{def:closed:form} may be numerically unstable if $X^TX$ is ill-conditioned. In Section~\ref{sec:experiments} we use the more stable but more expensive SVD approach. Details of this may be found in \citet{Silva_SVDRegression}.

\subsubsection{Closed Form Solutions for Linear Regression plus Data Normalisation}\label{sec:closed_form_data_norm}

Standard least-squares linear regression admits a closed-form solution. We claimed in Section~\ref{sec:discussion} that the other model families in Table~\ref{table:model_summary}, corresponding to RLinear and NLinear also admit a closed form solution under a least-squares loss. The reason for this is that one can formulate each of these models as linear regression on a suitable feature set of $\vec{x}$. 

In the following we will let $X,Y$ be $N\times L$ and $N\times T$ denote matrices containing $N$ training samples and their targets respectively.

\textbf{NLinear}: Suppose that we wish to find the matrix $\tilde{A}\in\mathbb{R}^{T \times L}$ and bias $\vec{b}\in\mathbb{R}^{T}$ which minimises $\vert\vert AX^T+\vec{b}-Y^T\vert\vert_2$ subject to the condition that the rows of $A$ must sum to one. Augment $X$ and $Y$ by computing the row-mean of $X$ and subtracting this off each of the rows of both $X$ and $Y$. We denote the augmented matrices as $\tilde{X}$ and $\Tilde{Y}$. Specifically, for row $i$; $\tilde{X}_i:=X_i - \mu(X_i)$ and $\tilde{Y}_i:=Y_i - \mu(X_i)$. We now solve the unconstrained least-squares regression on these augmented matrices. This yields a matrix $A^*$ and a bias $\vec{b}$ which minimises $\vert\vert A \tilde{X}^T+\vec{b}-\tilde{Y}^T \vert\vert_2$. This matrix is not uniquely defined since one may add any multiple of the vector $(1,1,1,\ldots, 1)$ to each row and obtain the same train loss since $\vec{1}\tilde{X} = \vec{0}$. Thus, we can choose to project $A$ so that each of it's rows do sum to 1. We claim that this matrix minimises our original constrained objective $\vert\vert AX^T+\vec{b}-Y^T\vert\vert_2$. 

Let $\vec{x},\vec{y}$ be an arbitrary context-target pair. Let $\mu(\vec{x})_k$ be notation for a $k$-dimensional vector formed by taking the mean of $\vec{x}$ and repeating this $k$-times. Since the rows of of $A$ sum to one then $A\mu(X)_L = \mu(X)_T$. Therefore:
\begin{align*}
   \vert\vert (A\vec{x}+\vec{b}-\vec{y})\vert\vert_2 &= \vert\vert (A\vec{x}+\vec{b}-\vec{y})+ \mu(\vec{x})_T^T-\mu(\vec{x})_T^T \vert\vert_2  \\ &=\vert\vert (A\vec{x}+\vec{b}-\vec{y})+ \mu(\vec{x})_T^T-(A\mu(\vec{x})_L)^T \vert\vert_2 \\
    &= \vert\vert (A(\vec{x}-\mu(\vec{x})_L)+\vec{b}-(\vec{y}-\mu(\vec{x})_T)\vert\vert_2 
\end{align*}
Therefore, any matrix $A$ will get the same MSE on any pair $\vec{x}, \vec{y}$ as it will on the augmented versions $\vec{x}-\mu(\vec{x})_L$, $\vec{y}-\mu(\vec{x})_T)$. It follows that $A^*$ obtains the same MSE on $\tilde{X},\tilde{Y}$ as $X,Y$ and vice versa. In particular if $A^*$ is also optimal for $\tilde{X},\tilde{Y}$ then it is too for $X,Y$. Thus, the matrix which we obtained by closed-form OLS on $\tilde{X},\tilde{Y}$ satisfies the properties claimed.

\textbf{RLinear}: We showed that one can find a global optima for the NLinear models class in closed form when using a mean-squares loss function. The same is true for RLinear and the constuction is much the same. We wish to find a matrix $A$ and bias $\vec{b}$ which minimise $\vert\vert AX^T+\vec{b}\sigma(X)-Y^T\vert\vert_2$ subject to the condition that the rows of $A$ must sum to one. Here $\sigma(X)$ denotes an $N$-dimensional vector formed of the standard deviation of the rows of $X$.  $AX^T+\vec{b}\sigma(X)$ is equivalent to augmenting $X$ by appending $\sigma(X)$ to $X$ as an additional column and then fitting a $(T+1)\times L$ matrix and no bias. Having appended this columns we then proceed along the same lines as before; subtracting the row means from $X$ and $Y$ and solving the resulting regression problem in closed form. One should be careful not to change the final column in this process and not include $\sigma(x)$ in the computation of the mean.

\section{Experiment Details}\label{sec:experiment_details}

\textbf{Datasets:} For our experiments in Section~\ref{exp:performance} we use 8 standard time series benchmarking datasets: ETTh1 and ETTh2: 7-channel hourly datasets (Train-Val-Test Splits [8545,2881,2881]). Their per-minute equivalents; ETTm1, ETTm2 (also 7-channel) (Train-Val-Test Splits [34465,11521,11521]). ECL, an hourly 321-channel Electricity dataset (Train-Val-Test Splits [18317,2633,5261]), Weather, a per-10-minute resolution 21-channel weather dataset (Train-Val-Test Splits [36792,5271,10540]), Traffic; an 862-channel traffic dataset (Train-Val-Test Splits [12185,1757,3509]) and Exchange: a small 8-channel finance dataset (Train-Val-Test Splits [5120,665,1422]).

In each case we use the well-established dataset divisions and normalisation protocols. We refer the reader to \citep{wu2021autoformer} for further details.

\textbf{Models:} The models we compare are DLinear, NLinear, RLinear, FITS and Linear (a single linear layer neural network). We also run FITS+IN and DLinear+IN. FITS+IN corresponds to the implementation of FITS used in \citet{xu2023fits}. Alongside these we run the closed-form solutions (OLS and OLS+IN). The mathematics behind these solutions are explained in Sec~\ref{sec:closed_form}. These are implemented using the LinearRegression model from scikit-learn using an SVD solver.

\textbf{Hyperparameters:} For each model, dataset, and horizon combination we train for 50 epochs using a learning rate of 0.0005 and the Adam optimizer with the default hyperparameter settings. We use a batch size of 128 in all experiments. We track the validation loss during training. At test time we load the model with minimal validation loss to evaluate on the training set, which is equivalent to early stopping. Each experiment is run (at least) 3 times using different random seeds and the standard deviation of the MSEs is computed and given in Table~\ref{tab:comparison}. We test on prediction horizons of 96, 192, 336 and 720 which are the standard in the literature \citep{nie2022time}. In all cases we use a context length of 720, as per the setting used by \citet{xu2023fits}. Our implementation of the DLinear model is taken from \citet{DlinearGITHUB}. Our implementation of FITS is taken from \citet{FITSGITHUB}. We re-implemented the RLinear and NLinear models using the detailed descriptions of these models in their respective papers \citep{zeng2023transformers, li2023revisiting}. 

\textbf{Weight Comparison Experiments: } In Section~\ref{sec:experiments} we compare the weight matrices, biases and forecasts of the different models. The hyperparameter settings are largely identical to those used to populate Table~\ref{tab:comparison}. One difference is that we compare our weights/biases/forecasts at the end of 50 epochs of training rather than using early stopping. A second difference is that the Figure \ref{fig:cosines} shows the cosine similarity over 350 training epochs and uses a learning rate of 0.0002 rather than 0.0005. The purpose of this change was to demonstrate clearly the convergence behaviour of these models, which inevitably requires a longer training run. All figures are obtained after training on the ETTh1 dataset. For the weight, forecast and cosine similarity figures (Figures~\ref{fig:heatmaps}, \ref{fig:forecasts}, \ref{fig:cosines}) we use a prediction horizon of 336, The bias figure (Figure~\ref{fig:biases}) uses a horizon of 720. 

\section{Further Discussion}\label{sec:further_discussion}

\subsection{Extracting the Weight Matrices}\label{sec:extracting_matrices}
Suppose that we have a trained model of the form $f(\vec{x})=A\vec{x}+\vec{b}\sigma(x)$ and we wish to determine $A$ and $\vec{b}$. The vector of all ones has standard deviation equal to zero. Therefore passing in this vector we obtain $f(\vec{1}) = A\vec{1} = \sum_{i=1}^L A_{ji}$, i.e the sum of the columns of $A$. Let $\frac{L}{\sqrt{L-1}}\vec{e_i}$ be a multiple of the $i$\textsuperscript{th} coordinate vector $\vec{e_i}$, where the multiple is chosen so that the vector has standard deviation equal to one. Passing in this vector for $f$ we get:
\begin{align}
    f(\vec{e_i}) = \frac{L}{\sqrt{L-1}}A\vec{e_i}+\vec{b}\sigma\bigg(\frac{L}{\sqrt{L-1}}\vec{e_i}\bigg) = \frac{L}{\sqrt{L-1}}A\vec{e_i}+\vec{b} = \frac{L}{\sqrt{L-1}}A_{\cdot, i} +\vec{b} \label{eqn:extract_weights}
\end{align}
One may solve this system of equations to derive $A$ and $\vec{b}$. In particular, $\sum_{i=1}^L f(\vec{e_i}) = L \vec{b} + \frac{L}{\sqrt{L-1}}\sum_{i=1}^LA_{\cdot, i}$.  So:
\begin{align*}
  \bigg(\frac{\sqrt{L-1}}{L} \sum_{i=1}^L f(\vec{e_i})\bigg) - f(\vec{1}) = (\sqrt{L-1})\vec{b}
\end{align*}
Having obtained $\vec{b}$ one can then use Eqn.~\ref{eqn:extract_weights} to derive the columns of $A$.

\subsection{Limitations and Future Work}\label{sec:limitations}

In Section~\ref{sec:model_analysis} we show how Linear+IN and Linear+RevIN have the same model classes. While Linear+RevIN and Linear+IN are identical in the single channel setting, they can differ subtly in the multi-channel setting. Specifically, in the setting where one shares weights of the linear layer across channels, but allows RevIN separate affine parameters per channel, then RevIN can have marginally different biases for each channel.

We wish to reiterate that our findings for FITS hold when the low-pass filter is not applied. As we have said, we are motivated to understand each model under their optimal settings, as such we have ignored the LPF which typically hinders performance. When one uses an LPF there will typically be restrictions on the model class meaning it is not equivalent to unconstrained linear regression. A further analysis of this is required in future work. 

One of the key contributions of FITS (\cite{xu2023fits}) is that it allows one to compress models by disregarding higher frequencies during training. Having established how to map between FITS models and their underlying affine representations, this opens the possibility of using the FITS technique to compress OLS solutions post hoc. This is something which should be looked at in future work.

\end{document}